\newcommand\DoToC{%
  \startcontents
  \printcontents{}{1}{\hrulefill\vskip0pt}
  \vskip0pt \noindent\hrulefill
  }
\theoremstyle{plain}
\newtheorem{definition}{Definition}
\newtheorem{corollary}{Corollary}
\newtheorem{lemma}{Lemma}
\newtheorem{proposition}{Proposition}
\newtheorem{theorem}{Theorem}
\newcommand{\method}{\textbf{CtrlNS}}
\title{Causal Temporal Representation Learning with Nonstationary Sparse Transition}
\author{
    Xiangchen Song \\
    Carnegie Mellon University\\
    \texttt{xiangchensong@cmu.edu} \\
}
\author{%
    Xiangchen Song$^{1}$ \quad
    Zijian Li$^{2}$ \quad
    Guangyi Chen$^{1,2}$ \quad
    Yujia Zheng$^{1}$ \\
    \textbf{Yewen Fan}$^{1}$ \quad
    \textbf{Xinshuai Dong}$^{1}$ \quad
    \textbf{Kun Zhang}$^{1,2}$\\
    $^1$Carnegie Mellon University\\
    $^2$Mohamed bin Zayed University of Artificial Intelligence
}
\begin{document}

\maketitle

\begin{abstract}

    Causal Temporal Representation Learning (Ctrl) methods aim to identify the temporal causal dynamics of complex nonstationary temporal sequences. Despite the success of existing Ctrl methods, they require either directly observing the domain variables or assuming a Markov prior on them. Such requirements limit the application of these methods in real-world scenarios when we do not have such prior knowledge of the domain variables. To address this problem, this work adopts a sparse transition assumption, aligned with intuitive human understanding, and presents identifiability results from a theoretical perspective. In particular, we explore under what conditions on the significance of the variability of the transitions we can build a model to identify the distribution shifts. Based on the theoretical result, we introduce a novel framework, \emph{Causal Temporal Representation Learning with Nonstationary Sparse Transition} (\method), designed to leverage the constraints on transition sparsity and conditional independence to reliably identify both distribution shifts and latent factors. Our experimental evaluations on synthetic and real-world datasets demonstrate significant improvements over existing baselines, highlighting the effectiveness of our approach.

\end{abstract}

\section{Introduction}

Causal learning from sequential data remains a fundamental yet challenging task~\cite{berzuini2012causality,ghysels2016testing,friston2009causal}. 
Discovering temporal causal relations among \emph{observed} variables has been extensively studied in the literature~\cite{granger1980testing,gong2015discovering,hyvarinen2010estimation}.
However, in many real-world scenarios such as video understanding~\cite{behrmann2021long}, observed data are generated by causally related latent temporal processes or confounders rather than direct causal edges.
This leads to the task of \emph{causal temporal representation learning}~(Ctrl), which aims to build compact representations that concisely capture the data generation processes by inverting the mixing function that transforms latent factors into observations and identifying the transitions that govern the underlying latent causal dynamics. This learning problem is known to be challenging without specific assumptions~\cite{locatello2019challenging,hyvarinen1999nonlinear}.
The task becomes significantly more complex with \emph{nonstationary} transitions, which are often characterized by multiple distribution shifts across different domains, particularly when these domains or shifts are also unobserved.

Recent advances in unsupervised representation learning, particularly through nonlinear Independent Component Analysis (ICA), have shown promising results in identifying latent variables by incorporating side information such as class labels and domain indices~\cite{hyvarinen2016unsupervised,hyvarinen2017nonlinear,hyvarinen2019nonlinear,khemakhem2020variational,sorrenson2020disentanglement,halva2020hidden,pmlr-v162-kong22a,lachapelle2022disentanglement,lachapelle2024nonparametric,zheng2023generalizing}. For time-series data, historical information is widely utilized to enhance the identifiability of latent temporal causal processes~\cite{halva2021disentangling,klindt2020towards,yao2021learning,yao2022temporally}. However, existing studies primarily derive results under stationary conditions~\cite{hyvarinen2017nonlinear,klindt2020towards} or nonstationary conditions with observed domain indices~\cite{khemakhem2020variational,yao2021learning,yao2022temporally}.
These methods are limited in application as general time series data are typically nonstationary and domain information is difficult to obtain. Recent studies~\cite{halva2020hidden, balsells2023identifiability, song2023temporally, li2024how} have adopted a Markov structure to handle nonstationary domain variables and can infer domain indices directly from observed data. (More related work can be found in Appendix~\ref{ap:related work}.) However, these methods face significant limitations; some are inadequate for modeling time-delayed causal relationships in latent spaces, and they rely on the Markov property, which cannot adequately capture the arbitrary nonstationary variations in domain variables. This leads us to the following important yet unresolved question:

\begin{center}
	{ \textit{How can we establish identifiability of nonstationary nonlinear ICA for} } \\
	{ \textit{general sequence data without prior knowledge of domain variables?} }
\end{center}
Relying on observing domain variables or known Markov priors to capture nonstationarity seems counter-intuitive, especially considering how easily humans can identify domain shifts given sufficient variation on transitions, such as video action segmentation~\cite{ridley2022transformers, xu2024efficient} and recognition~\cite{ibrahim2016hierarchical, gavrilyuk2020actor, kim2022detector} tasks.
In this work, we theoretically investigate the conditions on the significance of transition variability to identify distribution shifts.
The core idea is transition clustering, assuming transitions within the same domain are similar, while transitions across different domains are distinct.
Building on this identification theorem, we propose \emph{Causal Temporal Representation Learning with Nonstationary Sparse Transition}~(\method), to identify both distribution shifts and latent temporal dynamics. Specifically, we constrain the complexity of the transition function to identify domain shifts. Subsequently, with the identified domain variables, we learn the latent variables using conditional independence constraints. These two processes are jointly optimized within a VAE framework.

The main contributions of this work are as follows: (1) To our best knowledge, this is the first identifiability result that handles nonstationary time-delayed causally related latent temporal processes without prior knowledge of the domain variables. (2) We present \method, a principled framework for recovering both nonstationary domain variables and time-delayed latent causal dynamics. (3) Experiments on synthetic and real-world datasets demonstrate the effectiveness of the proposed method in recovering latent variables and domain indices.

\section{Problem Formulation}\label{sec:Problem Formulation}
\subsection{Nonstationary Time Series Generative Model}
\begin{wrapfigure}{r}{0.3\textwidth}
    \centering
    \vspace{-5.5em}
    \includegraphics[width=\linewidth]{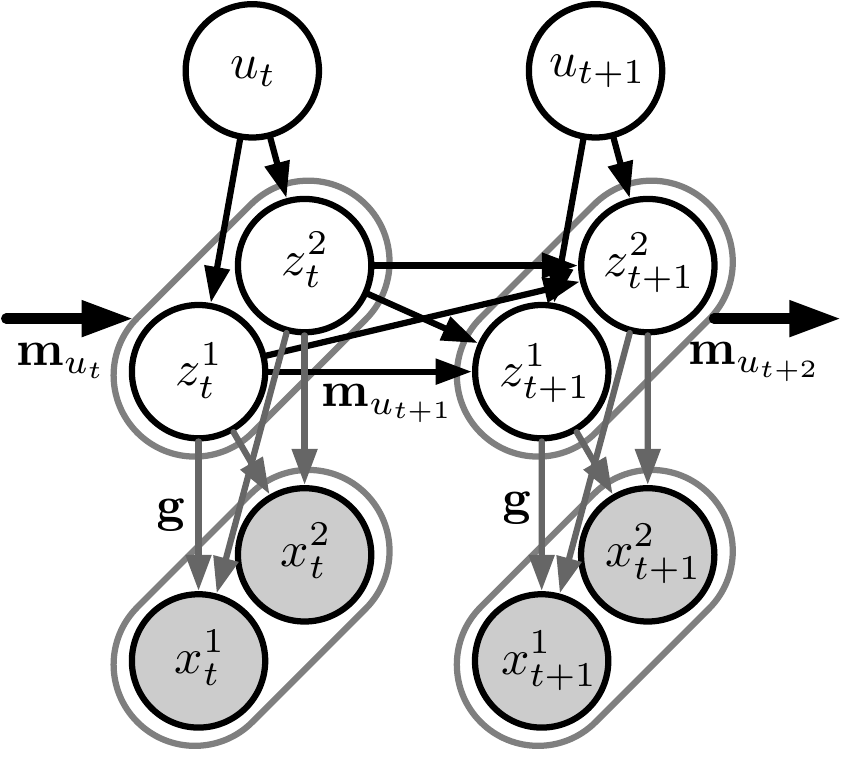}
    \caption{Graphical model for nonstationary causally related time-delayed time-series data with unobserved domain variables \(u_t\).}
    \label{fig:pgm}
\end{wrapfigure}

We first introduce a nonstationary time-series generative model. Our  observational dataset is \(\mathcal{D} = \{\mathbf{x}_t\}_{t=1}^{T}\), where \(\mathbf{x}_t\in \mathbb{R}^{n}\) is produced from causally related, time-delayed latent components \(\mathbf{z}_t \in \mathbb{R}^{n}\) through an invertible mixing function \(\mathbf{g}\):
\begin{equation}
    \mathbf{x}_t = \mathbf{g}(\mathbf{z}_t).
    \label{Eq:mixing function}
\end{equation}
In the nonstationary setting, transitions within the latent space vary over time. Define \(u\) as the domain or regime index variable, with \(u_t\) corresponding to time step \(t\). Assuming \(U\) distinct regimes, i.e., \(u_t \in \{1, 2, \ldots, U\}\), each regime exhibits unknown distribution shifts. Those regimes are characterized by \(U\) different transition functions \(\{\mathbf{m}_u\}_{u=1}^{U}\), which were originally explored in~\cite{huang2022adarl} through change factors to capture these distribution shifts in transition dynamics.
The \(i\)-th component of \(\mathbf{z}_t\), is then generated via \(i\)-th component of \(\mathbf{m}\):
\begin{equation}
    z_{t,i} = m_i\left(u_t, \{z_{t',j} \mid z_{t',j} \in \mathbf{Pa}(z_{t,i})\}, \epsilon_{t,i}\right),
    \label{Eq:transition function of z}
\end{equation}
where \(\mathbf{Pa}(z_{t,i})\) represents the set of latent factors directly influencing \(z_{t,i}\), which may include any subset of \(\mathbf{z}_{<t} = \left\{z_{\tau,i} \mid \tau \in \{1,2,\dots,t-1\}, i\in \{1,2,\dots,n\}\right\}\).
For analytical simplicity, we assume that the parents in the causal graph are restricted to elements in \(\mathbf{z}_{t-1}\). Extensions to higher-order cases, which involve multistep, time-delayed causal relations, are discussed in Appendix S1.5 of \cite{yao2022temporally}. These extensions are orthogonal to our contributions and are therefore omitted here for brevity.
Importantly, in a nonstationary context, \(\mathbf{Pa}(\cdot)\) may also be sensitive to the domain index \(u_t\), indicating that causal dependency graphs vary across different domains or regimes, which will be revisited in our discussion on identifiability. We assume that the generation process for each \(i\)-th component of \(\mathbf{z}_t\) is mutually independent, conditioned on \(\mathbf{z}_{<t}\) and \(u_t\), and we assume that the noise terms \(\epsilon_{t,i}\) are mutually independent across dimensions and over time, thereby excluding instantaneous causal interactions among latent causal factors. Figure~\ref{fig:pgm} illustrates the graphical model for this setting.

\subsection{Identifiability of Domain Variables and Latent Causal Processes}

We introduce the identifiability for both domain variables and time-delayed latent causal processes in Definitions \ref{def: Identifiable Domain Variables} and \ref{def: Identifiable Latent Causal Processes}, respectively. If the estimated latent processes are identifiable at least up to a permutation and component-wise invertible transformations, then the latent causal relationships are also immediately identifiable. This is because conditional independence relations comprehensively characterize the time-delayed causal relations within a time-delayed causally sufficient system, in which there are no latent causal confounders in the causal processes. Notably, invertible component-wise transformations on latent causal processes preserve their conditional independence relationships. We now present definitions related to observational equivalence, the identifiability of domain variables and latent causal processes.

\begin{definition}[Observational Equivalence]\label{def: Observational Equivalence}
Formally, consider \(\{\mathbf{x}_t\}_{t=1}^{T}\) as a sequence of observed variables generated by true temporally causal latent processes specified by \((\mathbf{m}, \mathbf{u}, p({\boldsymbol{\epsilon}}), \mathbf{g})\) given in Eqs.~\eqref{Eq:mixing function} and \eqref{Eq:transition function of z}. Here, \(\mathbf{m}\) and \(\boldsymbol{\epsilon}\) denote the concatenated vector form across \(n\) dimensions in the latent space. Similarly \(\mathbf{u}\) for timestep \(1\) to \(T\). A learned generative model \((\hat{\mathbf{m}}, \hat{\mathbf{u}}, \hat{p}({\boldsymbol{\epsilon}}), \hat{\mathbf{g}})\) is observationally equivalent to the ground truth one \((\mathbf{m}, \mathbf{u}, p({\boldsymbol{\epsilon}}), \mathbf{g})\) if the model distribution \(p_{\hat{\mathbf{m}}, \hat{\mathbf{u}}, \hat{p}_{\boldsymbol{\epsilon}}, \hat{\mathbf{g}}}(\{\mathbf{x}_t\}_{t=1}^{T})\) matches the data distribution \(p_{\mathbf{m}, \mathbf{u}, p_{\boldsymbol{\epsilon}}, \mathbf{g}}(\{\mathbf{x}_t\}_{t=1}^{T})\) everywhere.
\end{definition}

\begin{definition}[Identifiable Domain Variables]\label{def: Identifiable Domain Variables}
    Domain variables are said to be identifiable up to label swapping if observational equivalence (Def. \ref{def: Observational Equivalence}) implies identifiability of domain variables up to label swapping or a permutation \(\sigma\) for domain indices:
    \begin{equation}\label{eq:iden_domain_var}
    p_{\hat{\mathbf{m}}, \hat{\mathbf{u}}, \hat{p}_{\boldsymbol{\epsilon}}, \hat{\mathbf{g}}}(\{\mathbf{x}_t\}_{t=1}^{T}) = p_{\mathbf{m}, \mathbf{u}, p_{\boldsymbol{\epsilon}}, \mathbf{g}}(\{\mathbf{x}_t\}_{t=1}^{T})
    \Rightarrow \hat{u}_t = \sigma(u_t), \forall t \in \{1,2,\ldots,T\}.
    \end{equation}
\end{definition}
\begin{definition}[Identifiable Latent Causal Processes]\label{def: Identifiable Latent Causal Processes}
The latent causal processes are said to be identifiable if observational equivalence (Def. \ref{def: Observational Equivalence}) leads to the identifiability of latent variables up to a permutation \(\pi\) and component-wise invertible transformation \(\mathcal{T}\):
\begin{equation}\label{eq:iden_latent_causal}
p_{\hat{\mathbf{m}}, \hat{\mathbf{u}}, \hat{p}_{\boldsymbol{\epsilon}}, \hat{\mathbf{g}}}(\{\mathbf{x}_t\}_{t=1}^{T}) = p_{\mathbf{m}, \mathbf{u}, p_{\boldsymbol{\epsilon}}, \mathbf{g}}(\{\mathbf{x}_t\}_{t=1}^{T})
\Rightarrow \hat{\mathbf{g}}^{-1}(\mathbf{x}_t) = \mathcal{T} \circ \pi \circ \mathbf{g}^{-1}(\mathbf{x}_t), \quad \forall \mathbf{x}_t \in \mathcal{X},
\end{equation}
where \(\mathcal{X}\) denotes the observation space.
\end{definition}

\section{Identifiability Theory}\label{sec:Identifiability Theory}

In this section, we demonstrate that under mild conditions, the domain variables \(u_t\) are identifiable up to label swapping and the latent variables \(\mathbf{z}_t\) are identifiable up to permutation and component-wise transformations. We partition our theoretical discussion into two sections: (1) identifiability of nonstationary discrete domain variables \(u_t\) and (2) identifiability of latent causal processes.
We slightly extend the usage of \(\operatorname{supp}(\cdot)\) to define the square matrix support and the support of a square matrix function as follows:
\begin{definition}[Matrix Support]\label{def:supp_matrix}
The support (set) of a square matrix \(\mathbf{A} \in \mathbb{R}^{n \times n}\) is defined using the indices of non-zero entries as:
\begin{equation}
    \operatorname{supp}(\mathbf{A}) \coloneqq \left\{(i,j) \mid \mathbf{A}_{i,j} \neq 0\right\}.
\end{equation}
\end{definition}

\begin{definition}[Matrix Function Support]\label{def:supp_matrix_function}
The support (set) of a square matrix function \(\mathbf{A} : \Theta \rightarrow \mathbb{R}^{n \times n}\) is defined as:
\begin{equation}
    \operatorname{supp}(\mathbf{A}(\Theta)) \coloneqq \left\{(i,j) \mid \exists \theta \in \Theta, \mathbf{A}(\theta)_{i,j} \neq 0 \right\}.
\end{equation}
\end{definition}

For brevity, let \(\mathcal{M}\) and \(\widehat{\mathcal{M}}\) denote the \(n \times n\) binary matrices reflecting the support of the Jacobian \(\mathbf{J}_{\mathbf{m}}(\mathbf{z}_t)\) and \(\mathbf{J}_{\hat{\mathbf{m}}}(\hat{\mathbf{z}}_t)\) respectively. The \((i,j)\) entry of \(\mathcal{M}\) is \(1\) if and only if \((i,j) \in \operatorname{supp}(\mathbf{J}_{\mathbf{m}})\). And we can further define the transition complexity using its Fr\'{e}chet norm, \(|\mathcal{M}| = \sum_{i,j} \mathcal{M}_{i,j}\), and similarly for \(\widehat{\mathcal{M}}\). In the nonstationary context, this support matrix is a function of the domain index \(u\), denoted as \(\mathcal{M}_u\) and \(\widehat{\mathcal{M}}_u\). Additionally, we introduce the concept of weakly diverse lossy transitions for the data generation process:
\begin{definition}[Weakly Diverse Lossy Transition]\label{def:Diversely Lossy Transformation}
The set of transition functions described in Eq.~\eqref{Eq:transition function of z} is said to be diverse lossy if it satisfies the following conditions:
\begin{enumerate}[itemsep=-2pt]
        \item (\underline{Lossy}) For every time and indices tuple \((t,i,j)\) with edge \(z_{t-1,i} \to z_{t,j}\) representing a causal link defined with the parents set \(\mathbf{Pa}(z_{t,j})\) in Eq.~\ref{Eq:transition function of z}, transition function \(m_j\) is a lossy transformation w.r.t. \(z_{t-1,i}\) i.e., there exists an open set \(S_{t,i,j}\), changing \(z_{t-1,i}\) within this set will not change the value of \(m_j\), i.e. \(\forall z_{t-1,i} \in S_{t,i,j}, \, \frac{\partial m_j}{\partial z_{t-1,i}} = 0\).
        \item (\underline{Weakly Diverse}) For every element \(z_{t-1,i}\) of the latent variable \(\mathbf{z}_{t-1}\) and its corresponding children set \(\mathcal{J}_{t,i} = \{j \mid z_{t-1,i} \in \mathbf{Pa}(z_{t,j}), j\in \{1,2,\dots,n\}\}\), transition functions \(\{m_j\}_{j \in \mathcal{J}_{t,i}}\) are  weakly diverse i.e., the intersection of the sets \(S_{t,i} = \cap_{j\in \mathcal{J}_{t,i}} S_{t,i,j}\) is not empty, and such sets are diverse, i.e., \(S_{t,i}\neq \emptyset\), and \(S_{t,i,j} \setminus S_{t,i} \neq \emptyset, \forall j\in \mathcal{J}_{t,i}\).
    \end{enumerate}
\end{definition}

\subsection{Identifiability of Domain Variables}

\begin{theorem}[Identifiability of Domain Variables]
\label{thm: identifiability of C}
Suppose that the dataset \(\mathcal{D}\) are generated from the nonstationary data generation process as described in Eqs. \eqref{Eq:mixing function} and \eqref{Eq:transition function of z}. Suppose the transitions are weakly diverse lossy~(Def.~\ref{def:Diversely Lossy Transformation})
and the following assumptions hold:
\begin{enumerate}[label=\roman*.,ref=\roman*]

    \item \label{as: Mechanism Separability}\underline{(Mechanism Separability)} There exists a ground truth mapping \(\mathcal{C}: \mathcal{X} \times \mathcal{X}  \to \mathcal{U}\) determined the real domain indices, i.e., \(u_t = \mathcal{C}(\mathbf{x}_{t-1}, \mathbf{x}_{t})\).
    \item \label{as: Mechanism Sparsity} \underline{(Mechanism Sparsity)} The estimated transition complexity on dataset \(\mathcal{D}\) is less than or equal to ground truth transition complexity, i.e., \(\mathbb{E}_{\mathcal{D}} | \widehat{\mathcal{M}}_{\hat{u}} | \leq \mathbb{E}_{\mathcal{D}}  | \mathcal{M}_{u} |\).\label{eq: Mechanism Sparsity}
    
    \item \label{as: Mechanism Variability}\underline{(Mechanism Variability)} Mechanisms are sufficiently different. For all $u\neq u'$, $\mathcal{M}_{u} \neq \mathcal{M}_{u'}$ i.e. there exists index $(i,j)$ such that \(\left[\mathcal{M}_{u}\right]_{i,j} \neq \left[\mathcal{M}_{u'}\right]_{i,j}\).
    
\end{enumerate}
Then the domain variables $u_t$ is identifiable up to label swapping (Def.~\ref{def: Identifiable Domain Variables}).
\end{theorem}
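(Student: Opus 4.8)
The plan is to argue entirely at the level of the one-step observation transition kernels \(p(\mathbf{x}_t \mid \mathbf{x}_{t-1})\) and to deploy the three assumptions in turn: observational equivalence together with invertibility of the mixing maps to pin those kernels down, the weakly diverse lossy structure together with Mechanism Variability (\ref{as: Mechanism Variability}) to show that distinct true regimes leave distinct and detectable footprints on them, and Mechanism Sparsity (\ref{as: Mechanism Sparsity}) to forbid the learned model from conflating regimes. First I would note that since \(\mathbf{g}\) is invertible, \(\{\mathbf{x}_t\}\) is a nonstationary first-order Markov chain, so matching the full joint law in Definition~\ref{def: Observational Equivalence} forces \(p(\mathbf{x}_t\mid\mathbf{x}_{t-1}) = \hat{p}(\mathbf{x}_t\mid\mathbf{x}_{t-1})\) at every \(t\) (using positivity of the marginals). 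Pulling back through the fixed diffeomorphism \(\mathbf{h} = \hat{\mathbf{g}}^{-1}\circ\mathbf{g}\), the latent transition kernel generated by \(\mathbf{m}_{u_t}\) and the one generated by \(\hat{\mathbf{m}}_{\hat{u}_t}\) agree up to \(\mathbf{h}\) for every \(t\). By Eq.~\eqref{Eq:transition function of z} the kernel active at time \(t\) depends only on \(u_t\) (resp.\ only on \(\hat{u}_t\)), and Assumption~\ref{as: Mechanism Separability} makes \(u_t\) a genuine function of \((\mathbf{x}_{t-1},\mathbf{x}_t)\), so ``the regime at time \(t\)'' is a well-posed object attached to a fixed region of pair-space.

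The heart of the proof is a support-matching claim. Fix an estimated regime index \(\hat{u}\) and let \(T_{\hat{u}} = \{t : \hat{u}_t = \hat{u}\}\); since the single function \(\hat{\mathbf{m}}_{\hat{u}}\) must, by the previous step, reproduce the true kernel at \emph{every} \(t\in T_{\hat{u}}\), I want \(\widehat{\mathcal{M}}_{\hat{u}} \supseteq \mathcal{M}_{u_t}\) for each such \(t\), up to the coordinate relabeling carried by \(\mathbf{h}\). The key point is that an edge \(z_{t-1,i}\to z_{t,j}\) belonging to \(\mathcal{M}_{u_t}\) cannot be erased from the observable conditional: by the lossy part of Definition~\ref{def:Diversely Lossy Transformation} the map \(m_j\) is constant in \(z_{t-1,i}\) exactly on the open set \(S_{t,i,j}\) but, since \((i,j)\in\mathcal{M}_{u_t}\), genuinely varies off it, and by the weakly diverse part the sets \(S_{t,i}\) and \(S_{t,i,j}\setminus S_{t,i}\) isolate a region in which \(z_{t-1,i}\) drives the \(j\)-th coordinate while its influence is separated from the other children in \(\mathcal{J}_{t,i}\). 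Hence the true conditional of \(z_{t,j}\) truly depends on \(z_{t-1,i}\) there, and any \(\hat{\mathbf{m}}_{\hat{u}}\) reproducing it must depend on the corresponding coordinate, i.e.\ \((i,j)\in\widehat{\mathcal{M}}_{\hat{u}}\). In particular \(|\widehat{\mathcal{M}}_{\hat{u}_s}| \ge |\mathcal{M}_{u_s}|\) for every \(s\).

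With this in hand the combinatorial part is quick. If some estimated regime \(\hat{u}\) contained \(t,t'\in T_{\hat{u}}\) with \(u_t\neq u_{t'}\), then \(\mathcal{M}_{u_t}\neq\mathcal{M}_{u_{t'}}\) by Mechanism Variability, so the support claim gives \(\widehat{\mathcal{M}}_{\hat{u}}\supseteq\mathcal{M}_{u_t}\cup\mathcal{M}_{u_{t'}}\), which strictly contains at least one of \(\mathcal{M}_{u_t},\mathcal{M}_{u_{t'}}\); on the nonempty (hence positive-mass) set of time steps lying in that regime we get \(|\widehat{\mathcal{M}}_{\hat{u}}| > |\mathcal{M}_{u_s}|\), while \(|\widehat{\mathcal{M}}_{\hat{u}_s}|\ge|\mathcal{M}_{u_s}|\) elsewhere. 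Averaging over \(\mathcal{D}\) then yields \(\mathbb{E}_{\mathcal{D}}|\widehat{\mathcal{M}}_{\hat{u}}| > \mathbb{E}_{\mathcal{D}}|\mathcal{M}_u|\), contradicting Mechanism Sparsity. So no estimated regime spans two true regimes: \(\hat{u}_t = \hat{u}_{t'} \Rightarrow u_t = u_{t'}\), i.e.\ the estimated partition of the time axis refines the true one. Finally, a true regime cannot be split into several estimated ones either, since those estimated mechanisms would reproduce the same true kernel and share the same support, so they may be merged without affecting the observational law or the complexity in Assumption~\ref{as: Mechanism Sparsity}; taking the resulting most-parsimonious model (which realizes exactly \(U\) regimes) and noting that a refinement of a partition into \(U\) nonempty blocks by one into \(U\) nonempty blocks is the identity, each estimated regime equals a unique true regime and the induced relabeling \(\sigma\) satisfies \(\hat{u}_t = \sigma(u_t)\) for all \(t\), which is exactly Definition~\ref{def: Identifiable Domain Variables}.

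The step I expect to be the real obstacle is the support-matching claim: turning ``\(\hat{\mathbf{m}}_{\hat{u}}\) reproduces the true conditional'' into ``\(\hat{\mathbf{m}}_{\hat{u}}\)'s Jacobian contains the true edge'' while carrying everything through the still-unknown diffeomorphism \(\mathbf{h}\). This is exactly where weakly diverse lossiness is indispensable: the diverse open sets let us localize the data so that a single true edge switches on in isolation, so its imprint on the conditional distribution cannot be masked by sibling edges out of \(z_{t-1,i}\) nor silently absorbed into the reparametrization \(\mathbf{h}\). Making ``the corresponding coordinate'' well defined will likely require first extracting at least a partial permutation-plus-componentwise structure for \(\mathbf{h}\) (a weaker relative of Definition~\ref{def: Identifiable Latent Causal Processes}), or else arguing directly about which coordinates of \(\mathbf{x}_{t-1}\) the conditional of a given coordinate of \(\mathbf{x}_t\) retains a dependence on after conditioning on the remaining coordinates. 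Once that is secured, the sparsity-counting and refinement-to-permutation steps are routine.
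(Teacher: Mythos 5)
Your proposal takes essentially the same route as the paper: a complexity-counting argument under the sparsity constraint, in which an estimated regime that merges two true regimes is forced to carry the union of their Jacobian supports (strictly larger than at least one of them by Mechanism Variability, hence a strict increase in expected complexity), with the weakly diverse lossy condition invoked to ensure true edges cannot be cancelled through the unknown invertible reparametrization \(\mathbf{h}\). The one step you flag as the obstacle — turning kernel reproduction into edge preservation with a well-defined coordinate pairing under \(\mathbf{h}\) — is precisely what the paper's Non-decreasing Complexity lemma supplies, pairing coordinates via a permutation \(\sigma\) with \((i,\sigma(i)) \in \operatorname{supp}(\mathbf{J}_{\mathbf{h}})\) extracted from invertibility of \(\mathbf{h}\) and then using lossiness to rule out path cancellation, so your plan closes in the same way the paper's proof does.
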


Theorem~\ref{thm: identifiability of C} states that if we successfully learn a set of estimated transitions \(\{\hat{\mathbf{m}}_u\}_{u=1}^{U}\), the decoder \(\hat{\mathbf{g}}\), and the domain clustering assignment \(\hat{\mathcal{C}}\), where \(\hat{\mathbf{m}}_u\) corresponds to the estimation of Eq.~\eqref{Eq:transition function of z} for a particular regime or domain \(u\), and the system can fit the data as follows:
\small\begin{equation}\label{eq:sparse_transition}
    \hat{\mathbf{x}}_t = \hat{\mathbf{g}} \circ \hat{\mathbf{m}}_{\hat{u}_t} \circ \hat{\mathbf{g}}^{-1}(\mathbf{x}_{t-1}) \quad\text{and}\quad \hat{u}_t = \hat{\mathcal{C}}(\mathbf{x}_{t-1}, \mathbf{x}_t),
\end{equation}\normalsize
assuming that the transition complexity is kept low (as per Assumption~\ref{as: Mechanism Sparsity}). Then the estimated domain variables \(\hat{u}_t\) must be the true domain variables \(u_t\) up to a permutation. 

\textbf{Proof sketch}
The core idea of this proof is to demonstrate that the global minimum of transition complexity can only be achieved when the domain variables \(u_t\) are correctly estimated. (1) First, when we have an optimal decoder estimation \(\hat{\mathbf{g}}^*\) which is a component-wise transformation of the ground truth, incorrect estimations of \(u_t\) will strictly increase the transition complexity, i.e., \(\mathbb{E}_{\mathcal{D}} |\widehat{\mathcal{M}}^{*}_{\hat{u}}| > \mathbb{E}_{\mathcal{D}} |\widehat{\mathcal{M}}^{*}_{u}|\). (2) Second, with arbitrary estimations \(\hat{u}_t\), the transition complexity for non-optimal decoder estimation \(\hat{\mathbf{g}}\) will be equal to or higher than that for the optimal \(\hat{\mathbf{g}}^*\), i.e., \(\mathbb{E}_{\mathcal{D}} |\widehat{\mathcal{M}}_{\hat{u}}| \geq \mathbb{E}_{\mathcal{D}} |\widehat{\mathcal{M}}^{*}_{\hat{u}}|\). Thus, the global minimum of transition complexity can only be achieved when \(u_t\) is optimally estimated, which must be a permuted version of the ground truth domain variables \(u_t\). For a detailed proof, see appendix~\ref{ap: Appendix Proof for Theorem identifiability of C}.

\subsection{Remark on Mechanism Variability}\label{sec:remark}

The assumption of mechanism variability, as outlined in Assumption~\ref{as: Mechanism Variability}, requires the Jacobian support matrices across domains must be distinct, which means that the causal graph linking past states (\(\mathbf{z}_{t-1}\)) to current states (\(\mathbf{z}_t\)) varies by at least one edge. But addressing scenarios where the causal graphs remain constant but the functions behind the edges change is challenging without additional assumptions; more detailed discussion on why this is in general challenging can be found in the Appendix~\ref{ap:Discussion on the hardness of identifiability under mechanism function variability}.

To effectively address these scenarios, we extend the concept of the Jacobian support matrix by incorporating higher-order derivatives. This extension provides a more detailed characterization of the variability in transition functions across different domains. We now present the following definition to formalize this concept:

\begin{definition}[Higher Order Partial Derivative Support Matrix]\label{def:Higher Order Partial Derivative Matrix Support}
The \(k\)-th order partial derivative support matrix for transition \(\mathbf{m}\) denoted as \(\mathcal{M}^k\) is a binary \(n\times n\) matrix with
\small\begin{equation}
    \left[\mathcal{M}^k\right]_{i,j} = 1 \iff \exists \mathbf{z} \in \mathcal{Z}, \frac{\partial^k m_j}{\partial z_i^k} \neq 0.
\end{equation}\normalsize
\end{definition}
We utilize the variability in the higher-order partial derivative support matrix to extend the identifiability results of Theorem~\ref{thm: identifiability of C}. This extension applies to cases where the causal graphs remain identical across two domains, yet the transition functions take different forms.

\begin{corollary}[Identifiability under Function Variability]\label{cor: identifiability of domain variables under mechanism function variability}
    Suppose the data \(\mathcal{D}\) is generated from the nonstationary data generation process described in \eqref{Eq:mixing function} and \eqref{Eq:transition function of z}. Assume the transitions are weakly diverse lossy (Def.~\ref{def:Diversely Lossy Transformation}), and the mechanism separability assumption~\ref{as: Mechanism Separability} along with the following assumptions hold:
    \begin{enumerate}[label=\roman*.,ref=\roman*, start=5]

    \item \underline{(Mechanism Function Variability)} Mechanism Functions are sufficiently different. There exists \(K \in \mathbb{N}\) such that for all $u\neq u'$, there exists \(k \leq K\), $\mathcal{M}_{u}^k \neq \mathcal{M}_{u'}^k$ i.e. there exists index $(i,j)$ such that \(\left[\mathcal{M}_{u}^k\right]_{i,j} \neq \left[\mathcal{M}_{u'}^k\right]_{i,j}\).\label{as: Mechanism Function Variability}

    \item \underline{(Higher Order Mechanism Sparsity)} The estimated transition complexity on dataset \(\mathcal{D}\) is no more than ground truth transition complexity,\label{as: Higher Order Mechanism Sparsity}
    \small\begin{equation}
    \mathbb{E}_{\mathcal{D}} \sum_{k=1}^K | \widehat{\mathcal{M}}_{\hat{u}}^k | \leq \mathbb{E}_{\mathcal{D}}  \sum_{k=1}^K | \mathcal{M}_{u}^k |. 
    \end{equation}\normalsize
    \end{enumerate}
    Then the domain variables $u_t$ are identifiable up to label swapping (Def.~\ref{def: Identifiable Domain Variables}).
\end{corollary}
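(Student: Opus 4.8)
The plan is to lift the proof of Theorem~\ref{thm: identifiability of C} from the first-order Jacobian support $\mathcal{M}_u$ to the stacked family $(\mathcal{M}_u^1,\dots,\mathcal{M}_u^K)$, with aggregate complexity $\Phi(u)\coloneqq\sum_{k=1}^{K}|\mathcal{M}_u^k|$ playing the role of $|\mathcal{M}_u|$. The skeleton is identical to that of Theorem~\ref{thm: identifiability of C}: first, argue that an observationally equivalent learned model with a non component-wise decoder can only inflate each $\widehat{\mathcal{M}}^k_{\hat u}$, so it suffices to treat an optimal component-wise decoder $\hat{\mathbf{g}}^{*}$; second, show that under $\hat{\mathbf{g}}^{*}$ the correct clustering (up to a permutation $\sigma$) attains the ground-truth value of the aggregate complexity $\mathbb{E}_{\mathcal{D}}\Phi(u)$; third, show that under $\hat{\mathbf{g}}^{*}$ any clustering that is not a permutation of the truth strictly exceeds it. Chaining these three facts against Assumption~\ref{as: Higher Order Mechanism Sparsity} produces a contradiction unless $\hat u_t=\sigma(u_t)$ for all $t$, which is identifiability up to label swapping (Def.~\ref{def: Identifiable Domain Variables}).

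The genuinely new ingredient is bookkeeping how the higher-order supports transform under a component-wise reparameterization $\hat z_{t,i}=h_i(z_{t,i})$. By the Fa\`{a} di Bruno formula, at corresponding points the jet $(\partial^k_{\hat z_i}\hat m_j)_{k=1}^{K}$ is the image of $(\partial^k_{z_i}m_j)_{k=1}^{K}$ under an invertible, lower-triangular polynomial map in the local derivatives of the $h_i$'s; in particular $\partial^k_{\hat z_i}\hat m_j$ depends only on $\partial^{1}_{z_i}m_j,\dots,\partial^{k}_{z_i}m_j$, and vanishes identically exactly when $m_j$ does not depend on $z_i$. Two consequences I would use repeatedly: $\operatorname{supp}(\mathcal{M}^k_u)\subseteq\operatorname{supp}(\mathcal{M}^1_u)$ for every $k$, and the union $\bigcup_{k\le K}\operatorname{supp}(\widehat{\mathcal{M}}^k_u)=\operatorname{supp}(\mathcal{M}^1_u)=\bigcup_{k\le K}\operatorname{supp}(\mathcal{M}^k_u)$ is preserved exactly by $\hat{\mathbf{g}}^{*}$ (since the $k=1$ support is). This already drives the third step: if a cluster $\hat u$ mixes time steps from two true domains $u\neq u'$, then, running the weakly-diverse-lossy argument of Theorem~\ref{thm: identifiability of C} to turn local agreement of $\hat{\mathbf{m}}_{\hat u}$ with the two reparameterized ground-truth transitions into support containments, the stacked support $(\widehat{\mathcal{M}}^k_{\hat u})_k$ must entrywise dominate the stacked supports of both reparameterized transitions; because the fixed decoder $\hat{\mathbf{g}}^{*}$ cannot collapse the per-order difference guaranteed by Assumption~\ref{as: Mechanism Function Variability}, this domination is strict at some order $k\le K$, so $\mathbb{E}_{\mathcal{D}}\Phi$ strictly increases.

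The first step is the order-by-order analogue of the corresponding step in Theorem~\ref{thm: identifiability of C}: an optimal component-wise $\hat{\mathbf{g}}^{*}$ exists for the same reasons, and entangling latent coordinates only adds non-zeros to each $\widehat{\mathcal{M}}^k$, so the inequality survives summation over $k$. I expect the second step to be the main obstacle. Unlike the $k=1$ case, a component-wise but nonlinear reparameterization generically fills in all orders $k\le K$ on $\operatorname{supp}(\mathcal{M}^1_u)$, so $\widehat{\mathcal{M}}^{*,k}_u$ need not equal $\mathcal{M}^k_u$ and the aggregate complexity is not obviously invariant; worse, a carefully chosen reparameterization can annihilate high-order derivatives (e.g.\ linearize a monomial transition), so the budget of Assumption~\ref{as: Higher Order Mechanism Sparsity} could in principle be met by a reparameterized model with a wrong clustering. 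I would close this gap by observing that such sparsifying power is clustering-agnostic: whatever reparameterization the learner uses, pairing it with the correct clustering yields aggregate complexity no larger than pairing it with any non-permutation clustering (the third-step argument only ever uses the invariance of $\operatorname{supp}(\mathcal{M}^1_u)$ and a pointwise count of which orders a reparameterization can kill, both of which hold for every component-wise decoder). Since the ground-truth decoder with the correct clustering already meets the budget with equality, the constrained optimum must have clustering equal to a permutation of the truth. Making ``no larger than'' fully rigorous, namely comparing across reparameterizations rather than at a single fixed $\hat{\mathbf{g}}^{*}$ and ruling out exotic cancellations when two domains share $\mathcal{M}^1$ but differ only at higher order, is the part of the argument that demands the most care.
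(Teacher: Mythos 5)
Your skeleton matches the paper's (aggregate the stacked supports, show a wrong clustering merges two domains' supports and strictly inflates the sum, contradict Assumption~\ref{as: Higher Order Mechanism Sparsity}), and you correctly locate the crux: unlike the $k=1$ case, higher-order supports are \emph{not} invariant under an invertible reparameterization, and a suitably chosen nonlinear map could in principle annihilate high-order derivatives (your ``linearize a monomial'' worry). The genuine gap is that you never supply the ingredient that closes this, whereas the paper does: it proves a higher-order non-decreasing complexity lemma (Lemma~\ref{lemma: Non-decreasing Complexity Under Mechanism Function Variability}) whose engine is Proposition~\ref{prop:S3} --- conjugation by a smooth invertible map cannot turn a transition that is polynomial of a given order into one of strictly lower order (a root-counting argument) --- applied on a region, supplied by the weakly diverse lossy condition (Def.~\ref{def:Diversely Lossy Transformation}), where the map from $z_{t,i}$ to $\hat z_{t,i}$ acts effectively component-wise so that no cross-path cancellation can kill the edge at any order $k\le K$. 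That is exactly the ``exotic cancellation'' analysis you defer, and it is the only genuinely new step beyond Theorem~\ref{thm: identifiability of C}; without it the corollary is not proved.

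Moreover, your proposed workaround (``sparsifying power is clustering-agnostic'') is logically insufficient even if made precise. Assumption~\ref{as: Higher Order Mechanism Sparsity} compares the estimated complexity against the \emph{ground-truth} complexity $\mathbb{E}_{\mathcal{D}}\sum_k|\mathcal{M}^k_u|$, not against the same decoder paired with the correct clustering. So showing that, for a fixed decoder, the correct clustering is ``no larger than'' a wrong one does not yield a contradiction: if a reparameterization could genuinely shrink the per-domain higher-order supports, its savings could offset the merging penalty and a wrong clustering could still satisfy $\mathbb{E}_{\mathcal{D}}\sum_k|\widehat{\mathcal{M}}^k_{\hat u}|\le\mathbb{E}_{\mathcal{D}}\sum_k|\mathcal{M}^k_u|$. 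What you must show (and what the paper's Lemma~\ref{lemma: Non-decreasing Complexity Under Mechanism Function Variability} shows) is that \emph{no} invertible $\mathbf{h}$ can push the stacked complexity below the ground-truth level at all, so that the strict increase caused by merging two domains differing at some order $k\le K$ (Assumption~\ref{as: Mechanism Function Variability}) survives and contradicts the sparsity budget. Also note a small logical wrinkle in your step 2: you do not need (and the paper does not claim) that the optimal decoder reproduces each $\mathcal{M}^k_u$ exactly; it suffices that equality of the budget is \emph{attainable} by the true decoder with the correct clustering while every wrong clustering is forced strictly above it.
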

We utilize the fact that for two distinct domains, there exists an edge in the causal graph, and its \(k\)-th order partial derivative supports are different, making the two domains separable. The detailed proof of this extension is provided in Appendix~\ref{ap:Proof of Corollary}.
\subsection{Identifiability of Latent Causal Process}

Once the identifiability of \(u_t\) is achieved, the problem reduces to a nonstationary temporal nonlinear ICA with an observed domain index.

Leveraging the sufficient variability approach proposed in \cite{yao2022temporally}, we demonstrate full identifiability. This sufficient variability concept is further incorporated into the following lemma, adapted from Theorem 2 in \cite{yao2022temporally}:

\begin{lemma}[Theorem 2 in Yao et al., \cite{yao2022temporally}]\label{lemma:tdrl identifiability_of_z}
Suppose that the data \(\mathcal{D}\) are generated from the nonstationary data generation process as described in Eqs. \eqref{Eq:mixing function} and \eqref{Eq:transition function of z}. Let \(\eta_{kt}(u)\) denote the logarithmic density of \(k\)-th variable in \(\mathbf{z}_t\), i.e., \(\eta_{kt}(u)\triangleq \log p(z_{t,k} | \mathbf{z}_{t-1}, u)\), and there exists an invertible function $\hat{\mathbf{g}}$  that maps $\mathbf{x}_t$ to $\hat{\mathbf{z}}_t$, i.e., \(\hat{\mathbf{z}}_t = \hat{\mathbf{g}}(\mathbf{x}_t)\)

such that the components of $\hat{\mathbf{z}}_t$ are mutually  independent conditional on $\hat{\mathbf{z}}_{t-1}$.
\underline{(Sufficient variability)} Let
\small
\begin{align} \label{Eq:v1}
    \mathbf{v}_{k,t}(u) &\triangleq \Big(\frac{\partial^2 \eta_{kt}(u)}{\partial z_{t,k} \partial z_{t-1,1}}, \frac{\partial^2 \eta_{kt}(u)}{\partial z_{t,k} \partial z_{t-1,2}}, ..., \frac{\partial^2 \eta_{kt}(u)}{\partial z_{t,k} \partial z_{t-1,n}} \Big)^\intercal,\\ 
    \mathring{\mathbf{v}}_{k,t}(u) &\triangleq \Big(\frac{\partial^3 \eta_{kt}(u)}{\partial z_{t,k}^2 \partial z_{t-1,1}}, \frac{\partial^3 \eta_{kt}(u)}{\partial z_{t,k}^2 \partial z_{t-1,2}}, ..., \frac{\partial^3 \eta_{kt}(u)}{\partial z_{t,k}^2 \partial z_{t-1,n}} \Big)^\intercal. 
\end{align}

\begin{align} \label{Eq:v2}
    \mathbf{s}_{kt} &\triangleq \Big( \mathbf{v}_{kt}(1)^\intercal, ..., 
    \mathbf{v}_{kt}(U)^\intercal, 
    \frac{\partial^2 \eta_{kt}(2)}{\partial z_{t,k}^2 } - 
    \frac{\partial^2 \eta_{kt}(1)}{\partial z_{t,k}^2 }, ...,
    \frac{\partial^2 \eta_{kt}(U)}{\partial z_{t,k}^2 } - 
    \frac{\partial^2 \eta_{kt}(U-1)}{\partial z_{t,k}^2 }
    \Big)^\intercal,\\
    \mathring{\mathbf{s}}_{kt} &\triangleq \Big( \mathring{\mathbf{v}}_{kt}(1)^\intercal, ..., 
    \mathring{\mathbf{v}}_{kt}(U)^\intercal, 
    \frac{\partial \eta_{kt}(2)}{\partial z_{t,k} } - 
    \frac{\partial \eta_{kt}(1)}{\partial z_{t,k} }, ...,
    \frac{\partial \eta_{kt}(U)}{\partial z_{t,k} } - 
    \frac{\partial \eta_{kt}(U-1)}{\partial z_{t,k} }
    \Big)^\intercal.
\end{align}\normalsize
Suppose $\mathbf{x}_t = \mathbf{g}(\mathbf{z}_t)$ and that the conditional distribution $p(z_{k,t} \,|\, \mathbf{z}_{t-1})$ may change across $m$ domains. Suppose that the components of $\mathbf{z}_t$ are mutually independent conditional on $\mathbf{z}_{t-1}$ in each context. Assume that the components of $\hat{\mathbf{z}}_t$ produced by \(\hat{\mathbf{g}}\) are also mutually independent conditional on $\hat{\mathbf{z}}_{t-1}$. 
If the $2n$ function vectors $\mathbf{s}_{k,t}$ and $\mathring{\mathbf{s}}_{k,t}$, with $k=1,2,...,n$, are linearly independent, then $\hat{\mathbf{z}}_t$ is a permuted invertible component-wise transformation of $\mathbf{z}_t$. 
\end{lemma}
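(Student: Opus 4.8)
The plan is to run the standard change-of-variables plus sufficient-variability argument for nonstationary nonlinear ICA, now that Theorem~\ref{thm: identifiability of C} supplies the domain index: after relabeling we may assume $\hat u_t=u_t$, so in what follows $u$ is common to both models. First I would introduce the indeterminacy transform $\mathbf{h}\coloneqq\hat{\mathbf{g}}\circ\mathbf{g}$, a smooth invertible map with $\hat{\mathbf{z}}_t=\mathbf{h}(\mathbf{z}_t)$ and, applied at the previous step, $\hat{\mathbf{z}}_{t-1}=\mathbf{h}(\mathbf{z}_{t-1})$; write $z_{t,i}=\mathbf{h}^{-1}_i(\hat{\mathbf{z}}_t)$. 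Since the two models are observationally equivalent and agree on $u_t$, the conditional law of $(\mathbf{x}_{t-1},\mathbf{x}_t)$ given $u$ is matched, and pushing through $\mathbf{g}$ and $\hat{\mathbf{g}}$ together with the conditional-independence factorizations on both sides yields, for every $u$,
\[
\sum_{k=1}^{n}\log \hat p(\hat z_{t,k}\mid\hat{\mathbf{z}}_{t-1},u)\;=\;\sum_{i=1}^{n}\eta_{it}(u)\;+\;\log\bigl|\det\mathbf{J}_{\mathbf{h}^{-1}}(\hat{\mathbf{z}}_t)\bigr|,
\]
where the last term depends on $\hat{\mathbf{z}}_t$ only and is independent of $u$.

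Next I would differentiate this identity to expose the off-diagonal structure of $\mathbf{h}^{-1}$. Applying $\partial/\partial\hat z_{t,k}$ and then $\partial/\partial\hat z_{t,l}$ for a fixed pair $k\neq l$ annihilates the left-hand side, because each summand there depends on a single $\hat z_{t,k}$ (mutual conditional independence of $\hat{\mathbf{z}}_t$). Writing $w_{ik}\coloneqq\partial\mathbf{h}^{-1}_i/\partial\hat z_{t,k}$, this leaves
\[
0\;=\;\sum_{i=1}^{n}\Bigl(\tfrac{\partial^2\eta_{it}(u)}{\partial z_{t,i}^2}\,w_{il}w_{ik}\;+\;\tfrac{\partial\eta_{it}(u)}{\partial z_{t,i}}\,\tfrac{\partial^2\mathbf{h}^{-1}_i}{\partial\hat z_{t,k}\partial\hat z_{t,l}}\Bigr)\;+\;\tfrac{\partial^2\log|\det\mathbf{J}_{\mathbf{h}^{-1}}|}{\partial\hat z_{t,k}\partial\hat z_{t,l}},
\]
a relation among the $2n$ quantities $a_i\coloneqq w_{il}w_{ik}$ and $b_i\coloneqq\partial^2\mathbf{h}^{-1}_i/\partial\hat z_{t,k}\partial\hat z_{t,l}$ with a nuisance term free of $u$ and of $\hat{\mathbf{z}}_{t-1}$.

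I would then eliminate the nuisance term in the two complementary ways that generate exactly the vectors in Eqs.~\eqref{Eq:v1}--\eqref{Eq:v2}: (a) differencing the displayed relation across consecutive domains $u$ and $u-1$, which cancels the $u$-independent term and replaces the coefficients of $a_i,b_i$ by $\tfrac{\partial^2\eta_{it}(u)}{\partial z_{t,i}^2}-\tfrac{\partial^2\eta_{it}(u-1)}{\partial z_{t,i}^2}$ and $\tfrac{\partial\eta_{it}(u)}{\partial z_{t,i}}-\tfrac{\partial\eta_{it}(u-1)}{\partial z_{t,i}}$; and (b) differentiating with respect to each $\hat z_{t-1,m}$, which kills the $\hat{\mathbf{z}}_{t-1}$-free term and, via $z_{t-1,p}=\mathbf{h}^{-1}_p(\hat{\mathbf{z}}_{t-1})$, turns the coefficients into linear combinations (with the nonsingular matrix $\mathbf{J}_{\mathbf{h}^{-1}}(\hat{\mathbf{z}}_{t-1})$, hence rank-preserving) of the entries of $\mathring{\mathbf{v}}_{it}(u)$ and $\mathbf{v}_{it}(u)$. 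Stacking all resulting equations over $u$, over consecutive pairs, and over $m=1,\dots,n$ gives a homogeneous linear system in $(a_1,\dots,a_n,b_1,\dots,b_n)$ whose coefficient rows are assembled precisely from the $2n$ function vectors $\{\mathbf{s}_{it},\mathring{\mathbf{s}}_{it}\}_{i=1}^n$; by the sufficient-variability hypothesis this system has full rank $2n$, forcing $a_i=b_i=0$ for all $i$. In particular $w_{ik}w_{il}=0$ for every $i$ and every $k\neq l$, so each row of $\mathbf{J}_{\mathbf{h}^{-1}}$ has at most one nonzero entry; since $\mathbf{h}^{-1}$ is a diffeomorphism its Jacobian is everywhere nonsingular, hence each row — and by nonsingularity each column — has exactly one nonzero entry. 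Therefore $\mathbf{h}^{-1}$, and with it $\mathbf{h}=\hat{\mathbf{g}}\circ\mathbf{g}$, is a permutation composed with component-wise invertible maps, i.e.\ $\hat{\mathbf{z}}_t$ is a permuted component-wise transformation of $\mathbf{z}_t$.

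The step I expect to be the main obstacle is the bookkeeping in the third paragraph: one must verify that the equations harvested from routes (a) and (b), once reorganized by equation index, realize \emph{exactly} the $2n$ vectors $\mathbf{s}_{it},\mathring{\mathbf{s}}_{it}$ of Eqs.~\eqref{Eq:v1}--\eqref{Eq:v2}, with the correct interleaving of the first- and second-order blocks, so that ``linear independence of the $2n$ function vectors'' is precisely the nondegeneracy needed to conclude $a_i=b_i=0$; this is where the proof of Theorem~2 in \cite{yao2022temporally} does its real work and where I would follow that derivation. A secondary subtlety worth spelling out is the passage from observational equivalence of $\{\mathbf{x}_t\}_{t=1}^{T}$ to the per-domain conditional identity used above: this leans on Theorem~\ref{thm: identifiability of C} to align $\hat u_t$ with $u_t$, on each regime occurring with positive probability, and on the smoothness and positivity of $\mathbf{g},\hat{\mathbf{g}},p(\boldsymbol\epsilon)$ so that all the indicated derivatives exist.
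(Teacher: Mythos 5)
This lemma is imported verbatim from Theorem 2 of Yao et al.~\cite{yao2022temporally}; the paper itself offers no proof of it, and your sketch — pushing the change of variables through $\mathbf{h}=\hat{\mathbf{g}}\circ\mathbf{g}$, taking cross second derivatives in $\hat z_{t,k},\hat z_{t,l}$ to annihilate the conditionally independent side, then eliminating the $u$-independent Jacobian term by domain differencing and by differentiating in the lagged latents so that the sufficient-variability condition yields a rank-$2n$ homogeneous system forcing $w_{ik}w_{il}=0$ and the cross second derivatives of $\mathbf{h}^{-1}$ to vanish — is essentially the same argument as the original proof that the paper relies on by citation. Your opening appeal to Theorem~\ref{thm: identifiability of C} to align $\hat u_t$ with $u_t$ pertains to how the lemma is invoked in Theorem~\ref{thm: identifiability of z} rather than to the lemma itself (which takes $u$ as given), but this does not affect the correctness of the argument.
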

Then, in conjunction with Theorem~\ref{thm: identifiability of C}, complete identifiability is achieved for both the domain variables \(u_t\) and the independent components \(\mathbf{z}_t\). See detailed proof in Appendix~\ref{ap:Proof of Identifiability of the Latent Causal Processes}.
\begin{theorem}[Identifiability of the Latent Causal Processes]\label{thm: identifiability of z}
    Suppose that the data \(\mathcal{D}\) are generated from the nonstationary data generation process as described in Eqs.~\eqref{Eq:mixing function} and \eqref{Eq:transition function of z}, which satisfies the conditions in both Theorem~\ref{thm: identifiability of C} and Lemma~\ref{lemma:tdrl identifiability_of_z}, then the domain variables \(u_t\) are identifiable up to label swapping~(Def.~\ref{def: Identifiable Domain Variables}) and latent causal process $\mathbf{z}_t$ are identifiable up to permutation and a component-wise transformation~(Def.~\ref{def: Identifiable Latent Causal Processes}).
\end{theorem}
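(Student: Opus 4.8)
The plan is to chain the two identifiability results that have already been established. First I would invoke Theorem~\ref{thm: identifiability of C}: since the data generation process satisfies its hypotheses (weakly diverse lossy transitions together with mechanism separability, sparsity, and variability) and the learned model is observationally equivalent to the ground truth in the sense of Def.~\ref{def: Observational Equivalence}, we obtain a permutation $\sigma$ on domain indices with $\hat u_t = \sigma(u_t)$ for all $t$, which is exactly Def.~\ref{def: Identifiable Domain Variables}. Relabeling the estimated regimes by $\sigma^{-1}$ — a bijective renaming that does not alter the model distribution — we may henceforth assume $\hat u_t = u_t$. At this point the remaining estimation problem is precisely a nonstationary nonlinear ICA problem with an \emph{observed} discrete domain index, since both the ground-truth model and the learned model now condition their latent transitions on the same variable $u_t$.

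Second, I would check that the hypotheses of Lemma~\ref{lemma:tdrl identifiability_of_z} hold in this reduced problem. On the generative side, $\mathbf{x}_t=\mathbf{g}(\mathbf{z}_t)$ with the components of $\mathbf{z}_t$ mutually independent conditional on $\mathbf{z}_{t-1}$ within each context $u$, which is built into Eq.~\eqref{Eq:transition function of z}. On the learned side, $\hat{\mathbf{g}}$ is invertible, $\hat{\mathbf{z}}_t=\hat{\mathbf{g}}(\mathbf{x}_t)$, and the components of $\hat{\mathbf{z}}_t$ are mutually independent conditional on $\hat{\mathbf{z}}_{t-1}$ within each context — this is the conditional-independence constraint imposed on the estimated model, hence holds by construction of the hypothesis class. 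The sufficient-variability condition, i.e.\ linear independence of the $2n$ function vectors $\mathbf{s}_{k,t}$ and $\mathring{\mathbf{s}}_{k,t}$ assembled from the first-, second-, and third-order derivatives of $\eta_{kt}(u)=\log p(z_{t,k}\mid \mathbf{z}_{t-1},u)$ across the $U$ regimes, is part of "the conditions in $\ldots$ Lemma~\ref{lemma:tdrl identifiability_of_z}" assumed in the theorem statement. Applying the lemma then yields $\hat{\mathbf{z}}_t = \mathcal{T}\circ\pi\circ\mathbf{z}_t$ for some permutation $\pi$ and component-wise invertible $\mathcal{T}$, equivalently $\hat{\mathbf{g}}^{-1}(\mathbf{x}_t)=\mathcal{T}\circ\pi\circ\mathbf{g}^{-1}(\mathbf{x}_t)$ on $\mathcal{X}$, which is Def.~\ref{def: Identifiable Latent Causal Processes}. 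Combined with $\hat u_t=\sigma(u_t)$ from the first step, both claims follow.

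The main obstacle I anticipate is the "gluing" step that bridges the two theorems: making rigorous that, once $u_t$ is pinned down, the learned model genuinely behaves like an ICA model with an observed auxiliary variable so that the derivative-matching arguments behind Lemma~\ref{lemma:tdrl identifiability_of_z} can be run context by context. Concretely, Theorem~\ref{thm: identifiability of C} only constrains the support pattern $\widehat{\mathcal M}_{\hat u}$ of the estimated transition Jacobian, not the full $\hat{\mathbf m}$ or $\hat{\mathbf g}$; one has to argue that observational equivalence of the whole sequence $\{\mathbf{x}_t\}_{t=1}^{T}$ together with $\hat u_t=\sigma(u_t)$ forces the per-step conditionals $p(\mathbf{x}_t\mid \mathbf{x}_{t-1},u_t)$ of the two models to agree, which is what the lemma actually consumes. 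A secondary, lighter point is bookkeeping the two permutations: $\sigma$ acts on domain labels while $\pi$ acts on latent coordinates, so they operate on disjoint objects and do not interact, and the conclusion cleanly separates into identifiability of $u_t$ up to label swapping and of $\mathbf{z}_t$ up to permutation and component-wise transformation.
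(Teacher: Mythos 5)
Your proposal is correct and follows essentially the same route as the paper's own proof, which simply invokes Theorem~\ref{thm: identifiability of C} to identify \(u_t\) up to label swapping and then feeds the recovered domain variables into Lemma~\ref{lemma:tdrl identifiability_of_z} to identify \(\mathbf{z}_t\) up to permutation and component-wise transformation. Your additional care in checking the lemma's hypotheses and in noting the gluing of the per-context conditionals goes beyond the paper's two-line argument but does not change the approach.
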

\textbf{Discussion on Assumptions}

The proof of Theorem~\ref{thm: identifiability of C} relies on several key assumptions which align with human intuition for understanding of domain transitions.
Firstly, \emph{separability} states that if human observers cannot distinguish between two domains, it is unlikely that automated systems can achieve this distinction either.
Secondly, \emph{variability} requires that the transitions across domains are significant enough to be noticeable by humans, implying that there must be at least one altered edge in the causal graph across the domains.

The mechanism \emph{sparsity} is a standard assumption that has been previously explored in \citep{zheng2022on,zheng2023generalizing,lachapelle2024nonparametric} using sparsity regularization to enforce the sparsity of the estimated function.
The assumption of \emph{weakly diverse lossy transitions} is a mild and realistic condition in real-world scenarios, allowing for identical future latent states with differing past states. 

The \emph{sufficient variability} in Theorem~\ref{thm: identifiability of z} is widely explored and adopted in nonlinear ICA literature~\cite{hyvarinen2019nonlinear,yao2021learning,yao2022temporally,song2023temporally,li2024how}. For a more detailed discussion of the feasibility and intuition behind these assumptions, we refer the reader to the Appendix~\ref{ap: Discussion on Assumptions}.

\section{The \method~Framework}\label{sec:NSCtrl}

\subsection{Model Architecture}
\begin{wrapfigure}{r}{0.4\textwidth}
    \vspace{-2.5em}
    \centering
    \includegraphics[width=\linewidth]{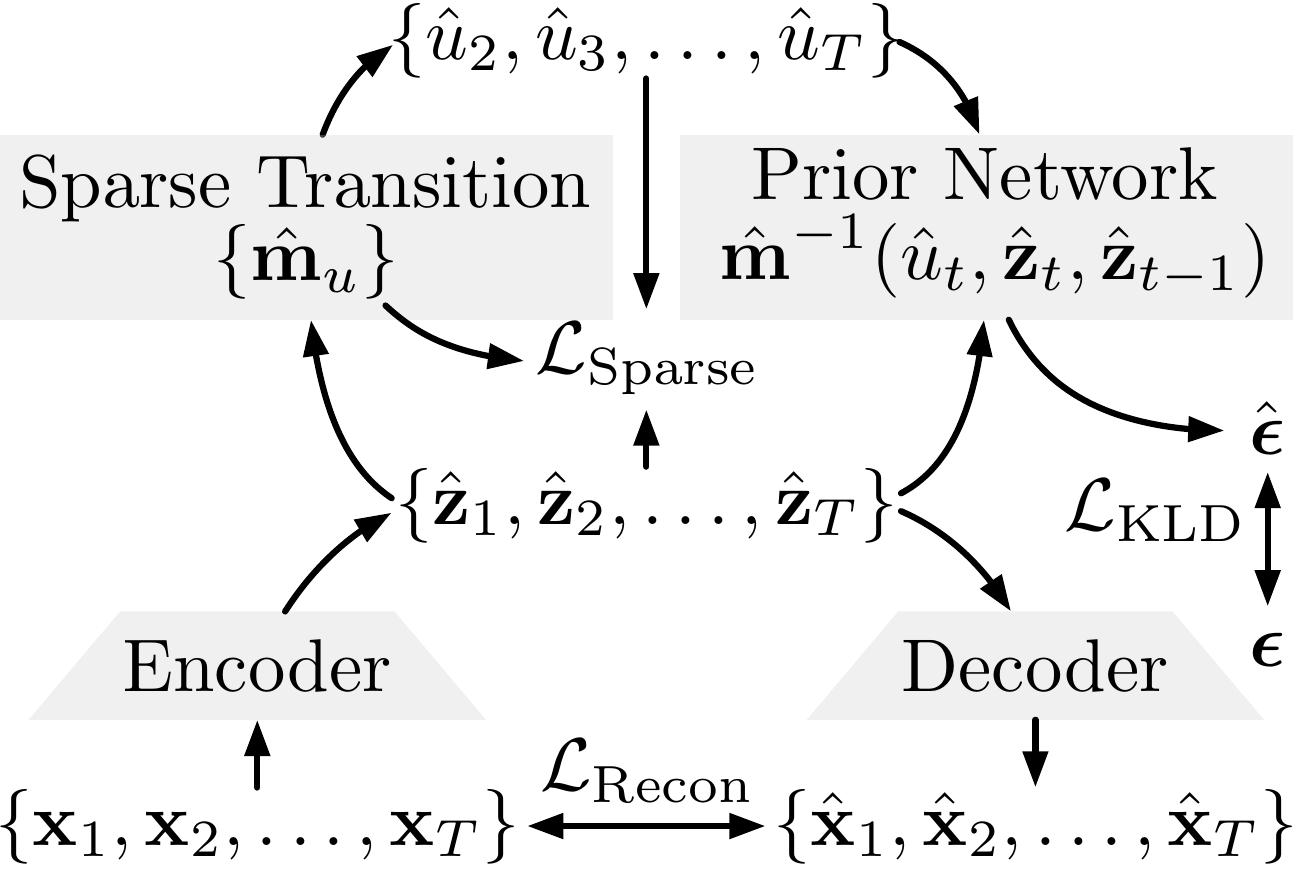}
    \caption{Illustration of \method~with (1) Sparse Transition, (2) Prior Network, (3) Encoder-Decoder Module.}\label{fig:arch}
    \vspace{-2em}
\end{wrapfigure}

Our framework builds on VAE~\cite{kingma2013auto,yingzhen2018disentangled} architecture, incorporating dedicate modules to handle nonstationarity. It enforces the conditions discussed in Sec.~\ref{sec:Identifiability Theory} as constraints. As shown in Fig.~\ref{fig:arch}, the framework consists of three primary components: (1) Sparse Transition, (2) Prior Network, and (3) Encoder-Decoder.

\textbf{Sparse Transition}

The transition module in our framework is designed to estimate transition functions \(\{\hat{\mathbf{m}}_u\}_{u=1}^{U}\) and a clustering function \(\hat{\mathcal{C}}\) as specified in Eq.~\eqref{eq:sparse_transition}. As highlighted in Sec.~\ref{sec:Identifiability Theory}, the primary objective of this module is to model the transitions in the latent space and minimize the empirical transition complexity. To achieve this, we implemented \(U\) different transition networks for various \(\hat{\mathbf{m}}(\hat{u}_t,\cdot)\) and added sparsity regularization to the transition functions via a sparsity loss. A gating function with a (hard)-Gumbel-Softmax function was used to generate \(\hat{u}_t\), which was then employed to select the corresponding transition network \(\hat{\mathbf{m}}_{\hat{u}_t}\). This network was further used to calculate the transition loss, which is explained in detail in Sec.~\ref{sec:Optimization}.

\textbf{Prior Network}
The Prior Network module aims to effectively estimate the prior distribution \(p(\hat{z}_{t,i} \,|\,\hat{\mathbf{z}}_{t-1}, \hat{u}_t)\). This is achieved by evaluating \(p(\hat{z}_t \,|\,\hat{\mathbf{z}}_{t-1}, \hat{u}_t) = p_{\epsilon_i}\left(\hat{m}_{i}^{-1}(\hat{u}_t,\hat{z}_{t,i}, \hat{\mathbf{z}}_{t-1})\right)\Big|\frac{\partial \hat{m}_{i}^{-1}}{\partial \hat{z}_{t,i}}\Big|\), where \(\hat{m}_{i}^{-1}(\hat{u}_t,\cdot)\) is the learned holistic inverse dynamics model.
To ensure the conditional independence of the estimated latent variables, \(p(\hat{\mathbf{z}}_t \,|\, \hat{\mathbf{z}}_{t-1})\), we utilize an isomorphic noise distribution for \(\epsilon\) and aggregate all estimated component densities to obtain the joint distribution \(p(\hat{\mathbf{z}}_t \,|\, \hat{\mathbf{z}}_{t-1}, \hat{u}_t)\) as shown in Eq.~\eqref{eq:np-trans}. Given the lower-triangular nature of the Jacobian, its determinant can be computed as the product of its diagonal terms. Detailed derivations is provided in Appendix~\ref{ap:derive}.
\small\begin{equation}\label{eq:np-trans}
\log p\left(\hat{\mathbf{z}}_t \mid \hat{\mathbf{z}}_{t-1}, \hat{u}_t\right) = \underbrace{\sum_{i=1}^n \log p(\hat{\epsilon}_i \mid \hat{u}_t)}_{\text{Conditional independence}} + \underbrace{\sum_{i=1}^n \log \Big| \frac{\partial \hat{m}^{-1}_i}{\partial \hat{z}_{t,i}}\Big|}_{\text{Lower-triangular Jacobian}}
\end{equation}\normalsize
\textbf{Encoder-Decoder}
The third component is an Encoder-Decoder module that utilizes reconstruction loss to enforce the invertibility of the learned mixing function $\hat{\mathbf{g}}$. Specifically, the encoder fits the demixing function $\hat{\mathbf{g}}^{-1}$ and the decoder fits the mixing function $\hat{\mathbf{g}}$.

\subsection{Optimization}\label{sec:Optimization}
The first training objective of \method~is to fit the estimated transitions with minimum transition complexity according to Eq.~\eqref{eq:sparse_transition}:
\small\begin{equation}
    \mathcal{L}_{\text{sparse}} \triangleq \underbrace{\mathbb{E}_{\mathcal{D}} L(\hat{\mathbf{m}}_{\hat{u}_t}(\hat{\mathbf{z}}_{t-1}), \hat{\mathbf{z}}_t)}_{\text{Transition loss}} + \underbrace{\mathbb{E}_{\mathcal{D}} | \widehat{\mathcal{M}}_{\hat{u}} |}_{\text{Sparsity loss}},
\end{equation}\normalsize
where \(L(\cdot,\cdot)\) is a regression loss function to fit the transition estimations, and the sparsity loss is approximated via \(L_2\) norm of the parameter in the transition estimation functions.

Then the second part is to maximize the Evidence Lower BOund (\textsc{ELBO}) for the VAE framework, which can be written as follows~(complete derivation steps are in Appendix~\ref{ap:ELBO}):

\small\begin{equation}
\begin{split}
\textsc{ELBO}\triangleq \mathbb{E}_{\mathbf{z}_t}\underbrace{\sum_{t=1}^{T} \log p_{\text{data}}(\mathbf{x}_t\mid\mathbf{z}_t)}_{-\mathcal{L}_{\text{Recon}}} 
+ \underbrace{\sum_{t=1}^{T}\log  p_{\text{data}}(\mathbf{z}_t\mid \mathbf{z}_{t-1},u_t) 
- \sum_{t=1}^{T}\log q_{\phi}(\mathbf{z}_t\mid\mathbf{x}_t)}_{-\mathcal{L}_{\text{KLD}}}
\end{split}
\end{equation}\normalsize
We use mean-squared error for the reconstruction likelihood loss $\mathcal{L}_{\text{Recon}}$.
The KL divergence  $\mathcal{L}_{\text{KLD}}$ is estimated via a sampling approach since with a learned nonparametric transition prior, the distribution does not have an explicit form.
Specifically, we obtain the log-likelihood of the posterior, evaluate the prior $\log p\left(\hat{\mathbf{z}}_t \mid \hat{\mathbf{z}}_{t-1}, \hat{u}_t\right)$ in Eq.~\eqref{eq:np-trans}, and compute their mean difference in the dataset as the KL loss: $\mathcal{L}_{\text{KLD}} = \mathbb{E}_{\mathbf{\hat z}_t \sim q\left(\mathbf{\hat z}_t \mid \mathbf{x}_t\right)} \log q(\mathbf{\hat z}_t|\mathbf{x}_t) - \log p\left(\hat{\mathbf{z}}_t \mid \hat{\mathbf{z}}_{t-1}, \hat{u}_t\right)$.

\section{Experiments}\label{sec:Experiments}
We assessed the identifiability performance of \method\ on both synthetic and real-world datasets.
For synthetic datasets, where we control the data generation process completely, we conducted a comprehensive evaluation. This evaluation covers the full spectrum of unknown nonstationary causal temporal representation learning, including metrics for both domain variables and the latent causal processes.
In real-world scenarios, \method\ was employed in video action segmentation tasks. The evaluation metrics focus on the accuracy of action estimation for each video frame, which directly reflects the identifiability of domain variables.

\subsection{Synthetic Experiments on Causal Representation Learning}
\textbf{Experiment Setup} \ \ 
For domain variables, we assessed the \emph{clustering accuracy} (\textbf{Acc}) to estimate discrete domain variables \(u_t\). As the label order in clustering algorithms is not predetermined, we selected the order or permutation that yielded the highest accuracy score.
For the latent causal processes, we computed the \emph{mean correlation coefficient} (\textbf{MCC}) between the estimated latent variables \(\hat{\mathbf{z}}_t\) and the ground truth \(\mathbf{z}_t\). The MCC, a standard measure in the ICA literature for continuous variables, assesses the identifiability of the learned latent causal processes. We adjusted the reported MCC values in Table~\ref{tab: synthetic experiment} by multiplying them by 100 to enhance the significance of the comparisons.

\textbf{Baselines} \ \
We compared our method with identifiable nonlinear ICA methods: 
(1) BetaVAE \cite{higgins2017betavae}, which ignores both history and nonstationarity information. 
(2) i-VAE \cite{khemakhem2020variational} and TCL \cite{hyvarinen2016unsupervised}, which leverage nonstationarity to establish identifiability but assume independent factors. 
(3) SlowVAE \cite{klindt2020towards} and PCL \cite{hyvarinen2017nonlinear}, which exploit temporal constraints but assume independent sources and stationary processes. 
(4) TDRL \cite{yao2022temporally}, which assumes nonstationary causal processes but with observed domain indices. 
(5) HMNLICA \cite{halva2020hidden}, which considers the unobserved nonstationary part in the data generation process but does not allow any causally related time-delayed relations. 
(6) NCTRL \cite{song2023temporally}, which extends HMNLICA to an autoregressive setting to allow causally related time-delayed relations in the latent space but still assumes a Markov chain on the domain variables.

\textbf{Result and Analysis} \ \
We generate synthetic datasets that satisfy our identifiability conditions in Theorems~\ref{thm: identifiability of C} and \ref{thm: identifiability of z}, detailed procedures are in Appendix~\ref{ap:synthetic}. The primary findings are presented in Table~\ref{tab: synthetic experiment}. 
Note: the MCC metric is consistently available in all methods; however, the Acc metric for \(u_t\) is only applicable to methods capable of estimating domain variables \(u_t\).
In the first row of Table~\ref{tab: synthetic experiment}, we evaluated a recent nonlinear temporal\begin{wraptable}{r}{0.7\textwidth}
    \caption{Experiment results of synthetic dataset on baseline models and the proposed \method. All experiments were conducted using three different random seeds to calculate the average and standard deviation. The best results are highlighted in \textbf{bold}.}
    \label{tab: synthetic experiment}
    \centering
    \renewcommand{\arraystretch}{0.95}
    \setlength{\tabcolsep}{4pt}
    \begin{tabular}{cccc}
    \toprule
    \(u_t\) & \textbf{Method} & \(\mathbf{z}_t\) \textbf{MCC} & \(u_t\) \textbf{Acc} (\%) \\
    \midrule
    Ground Truth &TDRL(GT) & 96.93 $\pm$ 0.16 & -\\
    \midrule
    \multirow{6}{*}{N/A} &TCL   & 24.19 $\pm$ 0.85  &\multirow{6}{*}{-} \\
    &PCL   & 38.46 $\pm$ 6.85 &\\
    &BetaVAE & 42.37 $\pm$ 1.47 &\\
    &SlowVAE & 41.82 $\pm$ 2.55 &\\
    &i-VAE  & 81.60 $\pm$ 2.51 &\\
    &TDRL  & 53.45 $\pm$ 1.31 &\\
    \midrule
    \multirow{3}{*}{Estimated}&HMNLICA & 17.82 $\pm$ 30.87    & 13.67 $\pm$ 23.67\\
    &NCTRL  & 47.27 $\pm$ 2.15   & 34.94 $\pm$ 4.20\\
    &\textbf{\method} & \textbf{96.74 $\pm$ 0.17} & \textbf{98.21 $\pm$ 0.05}\\
    \bottomrule
    \end{tabular}
    \vspace{-1em}
\end{wraptable}ICA method, TDRL, providing ground truth \(u_t\) to establish an upper performance limit for the proposed framework.
The high MCC (> 0.95) indicates the model's identifiability.
Subsequently, the table lists six baseline methods that neglect the nonstationary domain variables, with none achieving a high MCC. The remaining approaches, including our proposed \method, are able to estimate the domain variables \(u_t\) and recover the latent variables. In particular, HMNLICA exhibits instability during training, leading to considerable performance variability. This instability stems from HMNLICA’s inability to allow time-delayed causal relationships among hidden variables \(\mathbf{z}_t\), leading to model training failure when the actual domain variables deviate from the Markov assumption. In contrast, NCTRL, which extends TDRL under the same assumption, demonstrates enhanced stability and performance over HMNLICA by accommodating transitions in \(\mathbf{z}_t\). However, since they use incorrect assumption on the nonstationary domain variables, the performance of those methods can be even worse than methods which do not include the domain information. Nevertheless, considering the significant nonstationarity and deviation from the Markov properties, those methods struggled to robustly estimate either the domain variables or the latent causal processes. Compared to all baselines, our proposed \method~reliably recovers both \(u_t\) (MCC > 0.95) and \(\mathbf{z}_t\) (Acc > 95\%), and the MCC is on par with the upper performance bound when domain variables are given, justifying it effectivess.

\begin{wrapfigure}{l}{0.4\textwidth}
    \centering
    \vspace{-1.3em}
    \includegraphics[width=\linewidth]{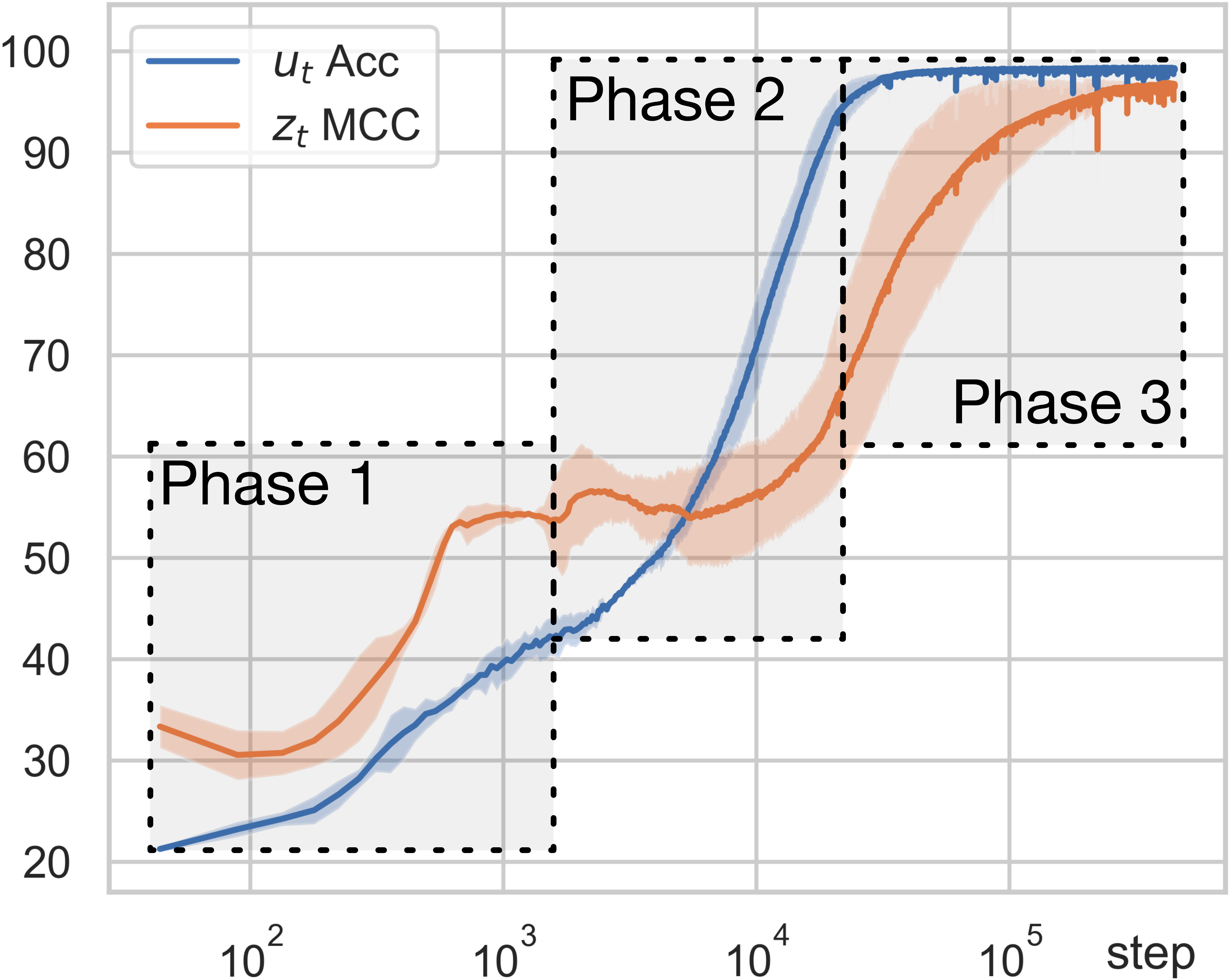}
    \vspace{-2mm}
    \caption{Visualization of three phase training process of \method.}\label{fig:simulation three phases}
    \vspace{-1em}
\end{wrapfigure}
\textbf{Detailed Training Analysis} \ \
To further validate our theoretical analysis, we present a visualization of the entire training process for \method~in Figure~\ref{fig:simulation three phases}. It consists of three phases: (1) In Phase 1, the initial estimations for both \(u_t\) and \(\mathbf{z}_t\) are imprecise. (2) During Phase 2, the accuracy of the estimation of \(u_t\) continues to improve, although the quality of the estimation of \(\mathbf{z}_t\) remains relatively unchanged compared with Phase 1. (3) In Phase 3, as \(u_t\) becomes clearly identifiable, the MCC of \(\mathbf{z}_t\) progressively improves, ultimately achieving full identifiability. This three-phase process aligns perfectly with our theoretical predictions. According to Theorem~\ref{thm: identifiability of C}, phases 1 and 2 should exhibit suboptimal \(\mathbf{z}_t\) estimations, while sparsity constraints can still guide training and improve the accuracy for domain variables \(u_t\). Once the accuracy of \(u_t\) approaches high, Theorem~\ref{thm: identifiability of z} drives the improvement in MCC for \(\mathbf{z}_t\) estimations, leading to the final achievement of full identifiability of both latent causal processes for \(\mathbf{z}_t\) and domain variables \(u_t\).

\subsection{Real-world Application on Weakly Supervised Action Segmentation}
\textbf{Experiment Setup}
Our method was tested on the video action segmentation task to estimate actions (domain variables \(u_t\)). Following~\cite{mucon2021,xu2024efficient}, we use the same weakly supervised setting utilizing meta-information, such as action order. The evaluation included several metrics: \emph{Mean-over-Frames} (\textbf{MoF}), the percentage of correctly predicted labels per frame; 
\emph{Intersection-over-Union} (\textbf{IoU}), defined as \(|I \cap I^*|/|I \cup I^*|\); and \emph{Intersection-over-Detection} (\textbf{IoD}), \(|I \cap I^*| / |I|\), \(I^*\) and \(I\) are the ground-truth segment and the predicted segment with the same class. Our evaluation used two datasets: \textbf{Hollywood Extended}~\cite{bojanowski2014weakly}, which includes 937 videos with 16 daily action categories, and \textbf{CrossTask}~\cite{zhukov2019cross}, focusing on 14 of 18 primary tasks related to cooking~\cite{Lu_2021_ICCV}, comprising 2552 videos across 80 action categories. 

\textbf{Model Design} \ \
Our model is build on top of ATBA~\cite{xu2024efficient} method which uses multi-layer transformers as backbone networks. We add our sparse transition module with the sparsity loss function detailed in Sec.~\ref{sec:Optimization}. Specifically, we integrated a temporally latent transition layer into ATBA's backbone, using a transformer layer across time axis for the Hollywood dataset and an LSTM for the CrossTask dataset. To enforce sparsity in the latent transitions, \(L_2\) regularization is applied to the weights of the temporally latent transition layer.

\begin{wraptable}{l}{0.62\textwidth}
\vspace{-1.3em}
\renewcommand{\arraystretch}{0.9}
\setlength{\tabcolsep}{2pt}
\caption{Real-world experiment result on action segmentation task. We use the reported value for the baseline methods from \cite{xu2024efficient}. Best results are highlighted in \textbf{bold}.}\label{tab:main real world}
\centering
\begin{tabular}{ccccc}
\toprule
\textbf{Dataset}            & \textbf{Method}                               & \textbf{MoF}  & \textbf{IoU}   & \textbf{IoD}   \\ 
\midrule
\multirow{6}{*}{Hollywood}  & HMM+RNN~\cite{richard2017weakly}              & -             & 11.9           & -              \\
                            & CDFL~\cite{li2019weakly}                      & 45.0          & 19.5           & 25.8           \\
                            
                            & TASL~\cite{Lu_2021_ICCV}           & 42.1          & 23.3           & 33 \\
                            & MuCon~\cite{mucon2021}                        & -             & 13.9           & -              \\
                            & ATBA~\cite{xu2024efficient}                   & 47.7          & 28.5           & 44.9           \\
                            \cmidrule(l){2-5} 
                            & \method~(\textbf{Ours})                       & \textbf{52.9}{\tiny\(\pm\)3.1} & \textbf{32.7}{\tiny\(\pm\)1.3}  & \textbf{52.4}{\tiny\(\pm\)1.8}  \\
\midrule
\multirow{6}{*}{CrossTask}  & NN-Viterbi~\cite{richard2018neuralnetwork}    & 26.5          & 10.7           & 24.0             \\
                            & CDFL~\cite{li2019weakly}                      & 31.9          & 11.5           & 23.8           \\
                            & TASL~\cite{Lu_2021_ICCV}                      & 40.7          & 14.5           & \textbf{25.1}           \\
                            & POC~\cite{lu2022set}                          & 42.8          & 15.6           & -              \\
                            & ATBA~\cite{xu2024efficient}                   & 50.6          & \textbf{15.7}           & 24.6  \\
                            \cmidrule(l){2-5} 
                            & \method~(\textbf{Ours})                       & \textbf{54.0}{\tiny\(\pm\)0.9} & \textbf{15.7}{\tiny\(\pm\)0.5}  & 23.6{\tiny\(\pm\)0.8}           \\
\bottomrule
\end{tabular}
\vspace{-1em}
\end{wraptable}

\textbf{Result and Analysis} \,
The primary outcomes for real-world applications in action segmentation are summarized in Table~\ref{tab:main real world}. Traditional methods based on hidden Markov models, such as HMM+RNN~\cite{richard2017weakly} and NN-Viterbi~\cite{richard2018neuralnetwork}, face challenges in these real-world scenarios. This observation corroborates our previous discussions on the limitations of earlier identifiability methods~\cite{halva2020hidden,balsells2023identifiability,song2023temporally}, which depend on the Markov assumption for domain variables. Our approach significantly outperforms the baselines in both the Hollywood and CrossTask datasets across most metrics. Especially in the Hollywood dataset, our method outperforms the base ATBA model by quite a large margin. Notably, the Mean-over-Frames~(\textbf{MoF}) metric aligns well with our identifiability results for domain variables \(u_t\). Our method demonstrates substantial superiority in this metric. For Intersection-over-Union (\textbf{IoU}) and Intersection-over-Detection (\textbf{IoD}), our results are comparable to those of the baseline methods in the CrossTask dataset and show its superiority in the Hollywood dataset. Furthermore, our proposed sparse transition module aligns with human intuition and is easily integrated into existing methods, further enhancing its impact in real-world scenarios. Additional discussion with visualization in Appendix~\ref{ap:vis}.

\begin{wraptable}{r}{0.45\textwidth}
    \vspace{-1.3em}
    \renewcommand{\arraystretch}{0.9}
    \caption{Ablation study on sparse transition module in Hollywood dataset.}
    \centering
    \begin{tabular}{lcccc}
    \toprule
         \textbf{Method}    & \textbf{MoF}  & \textbf{IoU}   & \textbf{IoD}   \\ 
    \midrule
         \method            & \textbf{52.9}          & \textbf{32.7}           & \textbf{52.4} \\
         - Complexity             & 50.5          & 31.5           & 51.5 \\
         - Module               & 47.7          & 28.5           & 44.9 \\
    \bottomrule
    \end{tabular}\label{tab:ablation study}
    \vspace{-1em}
\end{wraptable}
\textbf{Abalation Study} \ \
Furthermore, we conducted an ablation study on the sparse transition module, as detailed in Table~\ref{tab:ablation study}. In this study, ``- Complexity'' refers to the configuration where we retain the latent transition layers but omit the sparse transition complexity regularization term from these layers, and ``- Module'' indicates the removal of the entire sparse transition module, effectively reverting the model to the baseline ATBA model. The comparative results in Table~\ref{tab:ablation study} demonstrate that both the dedicated design of the sparse transition module and the complexity regularization term significantly enhance the performance.

\section{Conclusion}\label{sec:Conclusion}
In this study, we developed a comprehensive identifiability theory tailored for general sequential data influenced by nonstationary causal processes under unspecified distributional changes. We then introduced \method, a principled approach to recover both latent causal variables with their time-delayed causal relations, as well as determining the values of domain variables from observational data without relying on distributional or structural prior knowledge. Our experimental results demonstrate that \method~can reliably estimate the domain indices and recover the latent causal process. And such module can be easily adapted to handle real-world scenarios such as action segmentation task.

\bibliography{ref}
\bibliographystyle{unsrtnat}

\newpage
\appendix
  \textit{\large Supplement to}\\ \ \\
      {\Large \bf ``Causal Temporal Representation Learning with \\ Nonstationary Sparse Transition''}\
\newcommand{\beginsupplement}{%
	\setcounter{table}{0}
	\renewcommand{\thetable}{S\arabic{table}}%
	\setcounter{figure}{0}
	\renewcommand{\thefigure}{S\arabic{figure}}%
	\setcounter{section}{0}
	\renewcommand{\thesection}{S\arabic{section}}%
	\setcounter{theorem}{0}
	\renewcommand{\thetheorem}{S\arabic{theorem}}%
	\setcounter{proposition}{0}
	\renewcommand{\theproposition}{S\arabic{proposition}}%
	\setcounter{corollary}{0}
	\renewcommand{\thecorollary}{S\arabic{corollary}}%
	\setcounter{lemma}{0}
	\renewcommand{\thelemma}{S\arabic{lemma}}%
}

\beginsupplement
{\large Appendix organization:}

\DoToC 
\clearpage

\section{Identifiability Theory}
\label{ap: Appendix Identifiability}

\subsection{Proof for Theorem~\ref{thm: identifiability of C}}
\label{ap: Appendix Proof for Theorem identifiability of C}
We divide the complete proof into two principal steps: 
\begin{enumerate}
    \item Firstly, assuming access to the optimal mixing function estimation \(\hat{\mathbf{g}}^{*}\), we demonstrate that under the conditions in our theorem, the estimated clustering result will align with the ground truth up to label swapping. This alignment is due to the transition complexity with optimal \(\hat{u}^{*}_t\) and \(\hat{\mathbf{g}}^{*}\) being strictly lower than that with non-optimal \(\hat{u}_t\) but still optimal \(\hat{\mathbf{g}}^{*}\).
    \item Secondly, we generalize the results of the first step to cases where the mixing function estimation \(\hat{\mathbf{g}}\) is suboptimal. We establish that for any given clustering assignment, whether optimal or not, a suboptimal mixing function estimation \(\hat{\mathbf{g}}\) can not result in a lower transition complexity. Thus, the transition complexity in scenarios with non-optimal \(\hat{\mathbf{g}}\) will always be at least as high as in the optimal case. 
\end{enumerate}
From those two steps, we conclude that the global minimum transition complexity can only be attained when the estimation of domain variables \(\hat{u}_t\) is optimal, hence ensuring that the estimated clustering must match the ground truth up to label swapping. It is important to note that this condition alone does not guarantee the identifiability of the mixing function \(\mathbf{g}\). Because a setting with optimal \(\hat{u}^{*}_t\) and a non-optimal \(\hat{\mathbf{g}}\) may exhibit equivalent transition complexity to the optimal scenario, but it does not compromise our proof for the identifiability of domain variables \(u_t\). Further exploration of the mixing function's identifiability \(\mathbf{g}\) is discussed in Theorem~\ref{thm: identifiability of z} in the subsequent section.

\subsubsection{Identifiability of \texorpdfstring{$\mathcal{C}$}{C} under optimal \texorpdfstring{$\hat{\mathbf{g}}^{*}$}{g}}
\begin{wrapfigure}{r}{0.37\textwidth}
\vspace{-4em}
    \centering
    \includegraphics[width=0.35\textwidth]{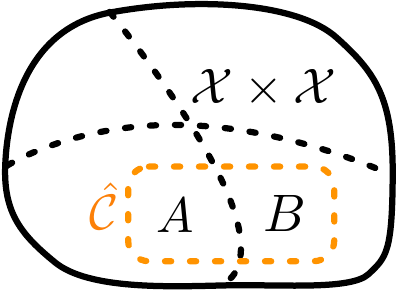} 
    \caption{Illustration of \(\hat{\mathcal{C}}\) incorrectly assigning two different domain subsets of inputs \(A\) and \(B\) into the same \(\hat{u}\). The black lines represent the ground truth partition of \(\mathcal{C}\) and the orange line represent the incorrect domain partition for set \(A\) and \(B\).}
    \label{fig: wrong C}
    \vspace{-3em}
\end{wrapfigure}
We fist introduce a lemma for this case when we can access an optimal mixing function estimation \(\hat{\mathbf{g}}^{*}\).

\begin{lemma}[Identifiability of $\mathcal{C}$ under optimal $\hat{\mathbf{g}}^{*}$]\label{lemma: id of C under optimal g}
    In addition to the assumptions in Theorem~\ref{thm: identifiability of C}, assume that we can also access an optimal estimation of \(\mathbf{g}\), denoted by \(\hat{\mathbf{g}}^{*}\), in which the estimated \(\hat{\mathbf{z}}_t\) is an invertible, component-wise transformation of a permuted version of \(\mathbf{z}_t\). Then the estimated clustering \(\hat{\mathcal{C}}\) must match the ground truth up to label swapping. 
\end{lemma}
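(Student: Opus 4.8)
The plan is to argue by contradiction with the mechanism sparsity hypothesis, Assumption~\ref{as: Mechanism Sparsity}. First I would set up the reduction afforded by optimality of \(\hat{\mathbf g}^{*}\): there are a permutation \(\pi\) and component-wise invertible maps \(h_1,\dots,h_n\) with \(\hat z_{t,i}=h_i(z_{t,\pi(i)})\); absorbing \(\pi\) into the bookkeeping we may write \(\hat z_{t,i}=h_i(z_{t,i})\). Observational equivalence (Def.~\ref{def: Observational Equivalence}) then pins down the learned transition attached to each estimated label on the data: on the event \(\{\hat{\mathcal C}(\mathbf x_{t-1},\mathbf x_t)=\hat u,\ \mathcal C(\mathbf x_{t-1},\mathbf x_t)=v\}\) the function \(\hat{\mathbf m}_{\hat u}\) must coincide with \(\mathbf h\circ\mathbf m_v\circ\mathbf h^{-1}\), up to the noise reparametrization, which does not touch the \(\mathbf z_{t-1}\)-Jacobian.

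Next I would record the one routine fact making supports comparable: by the chain rule \(\partial \hat m_j/\partial \hat z_{t-1,i}=h_j'(z_{t,j})\,(\partial m_j/\partial z_{t-1,i})\,(h_i^{-1})'(\hat z_{t-1,i})\), and the diagonal pre/post factors never vanish, so a component-wise invertible change of coordinates preserves the Jacobian support pointwise. Consequently, if \(\hat{\mathcal C}\) is already a relabeling \(\sigma\circ\mathcal C\) (every label \(\sigma(v)\) receiving the whole cell \(\mathcal C^{-1}(v)\)), then \(\widehat{\mathcal M}^{*}_{\sigma(v)}=\mathcal M_v\) and \(\mathbb E_{\mathcal D}\lvert\widehat{\mathcal M}^{*}_{\hat u}\rvert=\mathbb E_{\mathcal D}\lvert\mathcal M_u\rvert\), consistent with Assumption~\ref{as: Mechanism Sparsity}. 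The content of the lemma is therefore that no non-relabeling \(\hat{\mathcal C}\) can meet this bound.

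So suppose \(\hat{\mathcal C}\) is not a relabeling of \(\mathcal C\). Counting incidence pairs (true label, estimated label) of positive \(\mathcal D\)-frequency: a non-relabeling means some true cell spreads over at least two estimated labels or some estimated label collects at least two true cells, and since there are only \(U\) estimated labels for \(U\) true domains, a spread forces a collection; either way some estimated label \(\hat u^{\star}\) meets at least two true domains \(u\ne u'\) with positive frequency. The single function \(\hat{\mathbf m}_{\hat u^{\star}}\) must then equal \(\mathbf h\circ\mathbf m_u\circ\mathbf h^{-1}\) on one open, positive-measure sub-region and \(\mathbf h\circ\mathbf m_{u'}\circ\mathbf h^{-1}\) on another, so \(\widehat{\mathcal M}^{*}_{\hat u^{\star}}\supseteq\mathcal M_u\cup\mathcal M_{u'}\); by mechanism variability (Assumption~\ref{as: Mechanism Variability}) \(\mathcal M_u\ne\mathcal M_{u'}\), hence \(\mathcal M_u\cup\mathcal M_{u'}\) strictly contains \(\mathcal M_u\) or \(\mathcal M_{u'}\). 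Writing \(p_{v,\hat u}\) for the frequency of the incidence pair and \(R(\hat u)\) for the set of true domains meeting \(\hat u\), every label obeys \(\lvert\widehat{\mathcal M}^{*}_{\hat u}\rvert\ge\bigl\lvert\bigcup_{w\in R(\hat u)}\mathcal M_w\bigr\rvert\ge\lvert\mathcal M_v\rvert\) for each \(v\in R(\hat u)\), so \(\mathbb E_{\mathcal D}\lvert\widehat{\mathcal M}^{*}_{\hat u}\rvert=\sum_{\hat u}\sum_{v\in R(\hat u)}p_{v,\hat u}\,\lvert\widehat{\mathcal M}^{*}_{\hat u}\rvert\ge\sum_{v}p_v\lvert\mathcal M_v\rvert=\mathbb E_{\mathcal D}\lvert\mathcal M_u\rvert\), with the contribution of \(\hat u^{\star}\) making at least one summand strict. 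This contradicts Assumption~\ref{as: Mechanism Sparsity}, so \(\hat{\mathcal C}\) is a relabeling of \(\mathcal C\), i.e. identifiable up to label swapping.

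The step I expect to be the obstacle is the inclusion \(\widehat{\mathcal M}^{*}_{\hat u^{\star}}\supseteq\mathcal M_u\cup\mathcal M_{u'}\): a priori a clever clustering could route to \(\hat u^{\star}\) only those inputs of domain \(u\) lying in the "lossy" open sets \(S_{t,i,j}\) of Def.~\ref{def:Diversely Lossy Transformation} on which certain edges are inactive, making the restricted Jacobian support strictly smaller than \(\mathcal M_u\) and evading the complexity penalty. Ruling this out is exactly where the weakly diverse lossy transition hypothesis earns its keep — the diversity of the common set \(S_{t,i}\) against the larger \(S_{t,i,j}\) is meant to guarantee that any sub-region carrying a fixed label still witnesses the full support of that domain's transition. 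I would also have to handle the "almost everywhere" qualifiers carefully and account for the fact that \(u_t=\mathcal C(\mathbf x_{t-1},\mathbf x_t)\) depends on \(\mathbf x_t\) as well as \(\mathbf x_{t-1}\), so that "the inputs assigned to a label" is a subset of consecutive-pair space rather than of state space, which is what makes the two sub-regions simultaneously realizable by one transition function.
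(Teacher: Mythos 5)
Your proposal is essentially the paper's own argument: an estimated label that mixes two true domains forces its learned transition's Jacobian support to be the (binary) union of the two domains' supports, mechanism variability (Assumption~\ref{as: Mechanism Variability}) makes that union strictly larger than at least one of them, and integrating over the data distribution contradicts mechanism sparsity (Assumption~\ref{as: Mechanism Sparsity}), so only a relabeling survives. The one step you flag as the likely obstacle --- that a mixed label's sub-region still witnesses the full support $\mathcal{M}_u \cup \mathcal{M}_{u'}$ --- is handled in the paper by exactly the move you anticipate: a local perturbation argument ("find an input region for the $(a,b)$-th entry where changing $z_{t-1,a}$ moves $\mathbf{m}_i$ but not $\hat{\mathbf{m}}_k$"), with the weakly diverse lossy transition condition invoked to guarantee such regions exist, so your sketch matches the paper's proof in both structure and level of rigor.
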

\begin{proof}
In the first case we deal with optimal estimation \(\hat{\mathbf{g}}^{*}\) in which the estimated \(\hat{\mathbf{z}}_t\) is an invertible, component-wise transformation of a permuted version of \(\mathbf{z}_t\), but inaccurate estimated version of \(\hat{\mathcal{C}}\), Consider the following example (Figure \ref{fig: wrong C}):
\label{ap: id of C}

With slight abuse of notation, we use \(\mathcal{C}(A)\) to represent the domain assigned by \(\mathcal{C}\) to all elements in \(A\), and all elements in \(A\) have the same assignment. The same argument applies to \(B\).

Then, for an estimated \(\hat{\mathcal{C}}\), if it incorrectly assigns two subsets of input \(A\) and \(B\) to the same \(\hat{u}\) (Figure \ref{fig: wrong C} orange circle), i.e., 
\begin{equation}\label{eq: incorrect C}
    \mathcal{C}(A) = i \neq j = \mathcal{C}(B) \quad \text{but} \quad \hat{\mathcal{C}}(A) = \hat{C}(B) = k.
\end{equation}
Note that if the ground truth \(\mathcal{C}\) gives a consistent assignment for \(A\) and \(B\) but estimated \(\hat{\mathcal{C}}\) gives diverse assignments, i.e. \begin{equation}
    \hat{\mathcal{C}}(A) = i \neq j = \hat{\mathcal{C}}(B) \quad \text{but} \quad \mathcal{C}(A) = \mathcal{C}(B) = k,
\end{equation} it is nothing but further splitting the ground truth assignment in a more fine-grained manner.
This scenario does not break the boundaries of the ground truth assignments. Consider two cases in the estimation process:
\begin{enumerate}
    \item If the number of allowed regimes or domains exceeds that of the ground truth, such more fine-grained assignment is allowed. The ground truth can then be easily recovered by merging domains that share identical Jacobian supports.
    \item If the number of regimes or domains matches the ground truth, it can be shown that the inconsistent scenario outlined in Eq.~\eqref{eq: incorrect C} must occur.
\end{enumerate}
Given that these considerations do not directly affect our approach, they are omitted from further discussion for brevity.
\begin{figure}[htbp]
\centering
\resizebox{0.8\textwidth}{!}{\(
    \begin{array}{ccc}
    
    \mathcal{M}_i = 
    \begin{bmatrix}
    \textcolor{red}{0} & 0 & 1 & 0 \\
    1 & 1 & 0 & 0 \\
    0 & 0 & 0 & 1 \\
    0 & 0 & 1 & 0 \\
    \end{bmatrix}
    &
    \mathcal{M}_j = 
    \begin{bmatrix}
    \textcolor{red}{1} & 0 & 1 & 0 \\
    1 & 1 & 0 & 0 \\
    0 & 0 & 0 & 1 \\
    0 & 0 & 1 & 0 \\
    \end{bmatrix}
    &
    \widehat{\mathcal{M}}_k = 
    \begin{bmatrix}
    \textcolor{red}{1} & 0 & 1 & 0 \\
    1 & 1 & 0 & 0 \\
    0 & 0 & 0 & 1 \\
    0 & 0 & 1 & 0 \\
    \end{bmatrix} 
    
    \end{array}
\)}

\caption{Comparison of matrices $\mathcal{M}_i$, $\mathcal{M}_j$, and $\widehat{\mathcal{M}}_k$. The elements in red highlight the differences between them.}
\label{fig: comparison of Mi, Mj, M_hat_k}
\end{figure}

Then considering the case in Eq.~\ref{eq: incorrect C}, the estimated transition must cover the functions from both \(A\) and \(B\), then the learned transition \(\hat{\mathbf{m}}_k \) must have Jacobian \(\mathbf{J}_{\hat{\mathbf{m}}_k}\) with support matrix \(\widehat{\mathcal{M}}_k = \mathcal{M}_i + \mathcal{M}_j \) which is the binary addition of \(\mathcal{M}_i\) and \(\mathcal{M}_j\), such that for all indices in \(\mathcal{M}_i, \mathcal{M}_j\) if any of these two is \(1\), then the corresponding position in \(\widehat{\mathcal{M}}_k\) must be \(1\). That is because if that is not the case, for example, the \((a,b)\)-th location for \(\mathcal{M}_i\), \(\mathcal{M}_j\), and \(\widehat{\mathcal{M}}_k\) are 1, 0, and 0. Then we can easily find an input region for the \((a,b)\)-th location such that a small perturbation can lead to changes in \(\mathbf{m}_i\) but not in \(\mathbf{m}_j\) nor \(\hat{\mathbf{m}}_k\), which makes \(\hat{\mathbf{m}}_k\) unable to fit all of the transitions in \(A\cup B\) which cause contradiction. See the three matrices in Figure \ref{fig: comparison of Mi, Mj, M_hat_k} for an illustrative example.

By assumption~\ref{as: Mechanism Variability}, since all those support matrix differ at least one spot, which means the estimated version is not smaller than the ground truth. 
\begin{equation}\label{eq: >= M}
| \widehat{\mathcal{M}}_k | \geq | \mathcal{M}_j |\quad \text{and} \quad| \widehat{\mathcal{M}}_k | \geq |\mathcal{M}_i |,
\end{equation}
and the equality cannot hold true at the same time. 

Then from Assumption~\eqref{as: Mechanism Separability}, the expected estimated transition complexity can be expressed as:
\begin{equation}
\mathbb{E}_{\mathcal{D}} |\widehat{\mathcal{M}}_{\hat{u}}| = \int_{\mathcal{X} \times \mathcal{X}} p_{\mathcal{D}}(\mathbf{x}_{t-1}, \mathbf{x}_t) \cdot | \widehat{\mathcal{M}}_{\hat{\mathcal{C}}(\mathbf{x}_{t-1}, \mathbf{x}_t)} | \, d\mathbf{x}_{t-1} \, d\mathbf{x}_{t}.
\end{equation}
Similarly for ground truth one:
\begin{equation}
\mathbb{E}_{\mathcal{D}} |\mathcal{M}_{u}| = \int_{\mathcal{X} \times \mathcal{X}} p_{\mathcal{D}}(\mathbf{x}_{t-1}, \mathbf{x}_t) \cdot | \mathcal{M}_{\mathcal{C}(\mathbf{x}_{t-1}, \mathbf{x}_t)} | \, d\mathbf{x}_{t-1} \, d\mathbf{x}_{t}.
\end{equation}
Let us focus on the integral of the region \(A\cup B\), the subset of \(\mathcal{X} \times \mathcal{X}\) mentioned above.
If for some area \(p_{\mathcal{D}}(\mathbf{x}_{t-1}, \mathbf{x}_t) = 0\), then the clustering under this area is ill defined since there is no support from data. Hence we only need to deal with supported area.
For area that \(p_{\mathcal{D}}(\mathbf{x}_{t-1}, \mathbf{x}_t) >0\) and from Eq.~\eqref{eq: >= M} the equality cannot hold true at the same time, then the estimated version of the integral is strictly larger than the ground-truth version for any inconsistent clustering as indicated in Eq.~\eqref{eq: incorrect C}.

For the rest of regions in $\mathcal{X} \times \mathcal{X}$, any incorrect cluster assignment will further increase the \(\mathcal{M}\) with same reason as discussed above, then the estimated complexity is strictly larger than the ground truth complexity:
\begin{equation}
\mathbb{E}_{\mathcal{D}} | \widehat{\mathcal{M}}_{\hat{u}} | > \mathbb{E}_{\mathcal{D}} | \mathcal{M}_u |. \label{eq: identifiability of C larger |M|}
\end{equation}

But assumption~\eqref{as: Mechanism Sparsity} requires that the estimated complexity be less than or equal to the ground truth. Contradiction! Hence, the estimated \(\hat{\mathcal{C}}\) must match the ground truth up to label swapping.
\end{proof}

\subsubsection{Identifiability of \texorpdfstring{$\mathcal{C}$}{C} under arbitrary \texorpdfstring{\(\mathbf{g}\)}{g}}\label{ap: id of C under wrong g}
Now we can leverage the conclusion in Lemma~\ref{lemma: id of C under optimal g} to show the identifiability of domain variables under arbitrary mixing function estimation.
\begin{theorem}[Identifiability of Domain Variables]
Suppose that the data \(\mathcal{D}\) are generated from the nonstationary data generation process as described in Eqs. \eqref{Eq:mixing function} and \eqref{Eq:transition function of z}. Suppose the transitions are weakly diverse lossy~(Def.~\ref{def:Diversely Lossy Transformation}) and the following assumptions hold:
\begin{enumerate}[label=\roman*.,ref=\roman*]
    \item \underline{(Mechanism Separability)} There exists a ground truth mapping \(\mathcal{C}: \mathcal{X} \times \mathcal{X}  \to \mathcal{U}\) determined the real domain indices, i.e., \(u_t = \mathcal{C}(\mathbf{x}_{t-1}, \mathbf{x}_{t})\).
    \item \underline{(Mechanism Sparsity)} The estimated transition complexity on dataset \(\mathcal{D}\) is less than or equal to ground truth transition complexity, i.e., \(\mathbb{E}_{\mathcal{D}} | \widehat{\mathcal{M}}_{\hat{u}} | \leq \mathbb{E}_{\mathcal{D}}  | \mathcal{M}_{u} |\).
    \item \underline{(Mechanism Variability)} Mechanisms are sufficiently different. For all $u\neq u'$, $\mathcal{M}_{u} \neq \mathcal{M}_{u'}$ i.e. there exists index $(i,j)$ such that \(\left[\mathcal{M}_{u}\right]_{i,j} \neq \left[\mathcal{M}_{u'}\right]_{i,j}\).
\end{enumerate}
Then the domain variables $u_t$ is identifiable up to label swapping (Def.~\ref{def: Identifiable Domain Variables}).
\end{theorem}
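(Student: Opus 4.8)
The plan is to bootstrap from Lemma~\ref{lemma: id of C under optimal g}, which already settles the case where the demixing estimate is optimal, to an arbitrary estimate $\hat{\mathbf{g}}$ by showing that a suboptimal demixing can never reduce the transition complexity below what the optimal one achieves. First I would move from observation space to latent space: by observational equivalence the map $\mathbf{h}\triangleq\hat{\mathbf{g}}^{-1}\circ\mathbf{g}$ is an invertible reparametrization with $\hat{\mathbf{z}}_t=\mathbf{h}(\mathbf{z}_t)$, and on the data support the deterministic skeleton of the fitted latent transition is $\hat{\mathbf{m}}_{\hat{u}_t}=\mathbf{h}\circ\mathbf{m}_{u_t}\circ\mathbf{h}^{-1}$, where $u_t$ is the true domain index at time $t$. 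Hence $\mathbf{J}_{\hat{\mathbf{m}}_{\hat u}}=\mathbf{J}_{\mathbf{h}}\,\mathbf{J}_{\mathbf{m}_{u}}\,\mathbf{J}_{\mathbf{h}^{-1}}$, so the estimated support $\widehat{\mathcal{M}}_{\hat u}$ — pointwise, and thus in expectation over $\mathcal{D}$ — is entirely controlled by $\mathbf{h}$ together with the true mechanism $\mathbf{m}_{u}$.

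The crux is the following monotonicity claim: for every invertible $\mathbf{h}$ and every clustering assignment $\hat{\mathcal{C}}$, $\mathbb{E}_{\mathcal{D}}|\widehat{\mathcal{M}}_{\hat u}|\ge\mathbb{E}_{\mathcal{D}}|\widehat{\mathcal{M}}^{*}_{\hat u}|$, where $\widehat{\mathcal{M}}^{*}$ is the support obtained when $\mathbf{h}$ is restricted to a permutation composed with a component-wise transformation (the optimal-$\hat{\mathbf{g}}^{*}$ case). When $\mathbf{h}$ is of that restricted form, $\mathbf{J}_{\mathbf{h}}$ and $\mathbf{J}_{\mathbf{h}^{-1}}$ are scaled permutation matrices, so $|\widehat{\mathcal{M}}^{*}_{\hat u}|=|\mathcal{M}_{u}|$ pointwise. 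When $\mathbf{h}$ genuinely mixes coordinates, I would invoke the \emph{weakly diverse lossy} structure of Definition~\ref{def:Diversely Lossy Transformation}: for each parent coordinate $z_{t-1,i}$ there is an open region $S_{t,i}$ on which all children mechanisms $\{m_j\}_{j\in\mathcal{J}_{t,i}}$ are simultaneously insensitive to $z_{t-1,i}$, while each $S_{t,i,j}\setminus S_{t,i}$ is nonempty. The argument is that on these regions $\mathbf{J}_{\mathbf{m}_{u}}$ has a structured block of zeros that a non-permutation-scaling $\mathbf{h}$ cannot preserve through the product $\mathbf{J}_{\mathbf{h}}\,\mathbf{J}_{\mathbf{m}_{u}}\,\mathbf{J}_{\mathbf{h}^{-1}}$ without re-introducing nonzeros elsewhere; one thereby exhibits a positive-$p_{\mathcal{D}}$ subset of $\mathcal{X}\times\mathcal{X}$ on which $|\widehat{\mathcal{M}}_{\hat u}|>|\widehat{\mathcal{M}}^{*}_{\hat u}|$, and on the complement the inequality is weak by the same binary support-addition reasoning used in the proof of Lemma~\ref{lemma: id of C under optimal g} (cf.\ Figure~\ref{fig: comparison of Mi, Mj, M_hat_k}).

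With the monotonicity claim in hand the theorem follows by sandwiching. For the true $\mathbf{g}$ and true clustering $\mathcal{C}$ the transition complexity equals $\mathbb{E}_{\mathcal{D}}|\mathcal{M}_{u}|$. For any observationally equivalent estimate $(\hat{\mathbf{g}},\hat{\mathcal{C}})$ whose clustering does \emph{not} agree with $\mathcal{C}$ up to label swapping, combining the monotonicity claim (to reduce to the optimal-$\hat{\mathbf{g}}^{*}$ case) with the strict inequality established in the proof of Lemma~\ref{lemma: id of C under optimal g} gives $\mathbb{E}_{\mathcal{D}}|\widehat{\mathcal{M}}_{\hat u}|>\mathbb{E}_{\mathcal{D}}|\mathcal{M}_{u}|$. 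But Assumption~\ref{as: Mechanism Sparsity} imposes $\mathbb{E}_{\mathcal{D}}|\widehat{\mathcal{M}}_{\hat u}|\le\mathbb{E}_{\mathcal{D}}|\mathcal{M}_{u}|$ — a contradiction; hence $\hat{u}_t=\sigma(u_t)$ for all $t$, which is exactly Definition~\ref{def: Identifiable Domain Variables}.

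I expect the main obstacle to be the monotonicity claim, specifically proving that a coordinate-mixing $\mathbf{h}$ strictly inflates the Jacobian support on a set of positive probability: unlike the pure support-addition argument of Lemma~\ref{lemma: id of C under optimal g} (where both candidate supports come from the \emph{same} latent parametrization), here one must control how an arbitrary smooth invertible change of variables interacts with the zero-pattern of $\mathbf{J}_{\mathbf{m}_{u}}$, and the weak-diversity condition is precisely what rules out the pathological cancellations that would otherwise let a clever $\mathbf{h}$ keep the support small. Some care is also needed with null sets — regions where $p_{\mathcal{D}}=0$, or where Jacobian entries vanish only on a measure-zero set — which I would handle using the function-support notion of Definition~\ref{def:supp_matrix_function} together with the essential support under $p_{\mathcal{D}}$.
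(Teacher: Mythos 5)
Your proposal mirrors the paper's own two-step argument: reduce to the optimal-decoder case already settled in Lemma~\ref{lemma: id of C under optimal g}, then prove a monotonicity/non-decreasing-complexity claim for an arbitrary invertible reparametrization \(\mathbf{h}\) — using the weakly diverse lossy condition to rule out cancellations and an invertibility-based permutation indexing — and finish by contradiction with Assumption~\ref{as: Mechanism Sparsity}; this is precisely the paper's Non-decreasing Complexity lemma combined with its edge-counting proposition. The only cosmetic differences are that you define \(\mathbf{h}=\hat{\mathbf{g}}^{-1}\circ\mathbf{g}\) rather than \(\hat{\mathbf{g}}^{-1}\circ\hat{\mathbf{g}}^{*}\) (equivalent up to a permutation and component-wise map) and you aim for strictness on a positive-measure set where the paper only needs, and proves, the weak inequality.
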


\begin{proof}
To demonstrate the complete identifiability of $\mathcal{C}$, independent of the estimation quality of $\mathbf{g}$, we must show that for any arbitrary estimation $\hat{\mathcal{C}} \neq \sigma(\mathcal{C})$, the induced $\hat{\mathcal{M}}_{\hat{u}}$ for inaccurate estimation $\hat{\mathbf{g}}$ has a transition complexity at least as high as in the optimal $\hat{\mathbf{g}}^{*}$ case. If this holds, from Lemma~\ref{lemma: id of C under optimal g}, we can conclude that the transition complexity of optimal $\hat{\mathcal{C}}^{*} = \sigma(\mathcal{C})$ and optimal $\hat{\mathbf{g}}^{*}$ is strictly smaller than any non-optimal $\hat{\mathcal{C}}$ and any $\hat{\mathbf{g}}$.

Suppose the estimated decoder and corresponding latent variables are $\hat{\mathbf{g}}$ and $\hat{\mathbf{z}}_t$, respectively, then the following relation holds:
\begin{equation}
    \hat{\mathbf{g}}^{*}(\mathbf{z}_t) = \hat{\mathbf{g}}(\hat{\mathbf{z}}_t).
    \label{eq: g = g hat}    
\end{equation}
Since $\hat{\mathbf{g}}$ is invertible, by composing $\hat{\mathbf{g}}^{-1}$ on both sides, we obtain:
\begin{equation}
    \hat{\mathbf{g}}^{-1} \circ \hat{\mathbf{g}}^{*}(\mathbf{z}_t) = \hat{\mathbf{g}}^{-1} \circ \hat{\mathbf{g}}(\hat{\mathbf{z}}_t).
\end{equation}
Let 
\begin{equation}
    \mathbf{h} \coloneq \hat{\mathbf{g}}^{-1} \circ \hat{\mathbf{g}}^{*},
    \label{eq: h}
\end{equation}
we then have:
\begin{equation}\label{eq:h}
    \mathbf{h}(\hat{\mathbf{z}}^{*}_t) = \hat{\mathbf{z}}_t.
\end{equation}
We aim to demonstrate that under this transformation, if $\mathbf{h}$ is not a permutation and component-wise transformation, the introduced transition complexity among estimated $\hat{\mathbf{z}}$ will not be smaller than the optimal $\hat{\mathbf{g}}^{*}$.

\begin{proposition}\label{prop: s1}
Suppose \(|\widehat{\mathcal{M}}| < |\widehat{\mathcal{M}}^{*}|\), then for any permutation \(\sigma\) mapping the indices of the dimensions from \(\widehat{\mathcal{M}}\) to \(\widehat{\mathcal{M}}^{*}\), there must exist an index pair \((i,j)\) such that \(\widehat{\mathcal{M}}_{i,j} = 0\) and \(\widehat{\mathcal{M}}^{*}_{\sigma(i),\sigma(j)} = 1\).
\end{proposition}

\begin{proof}
An intuitive explanation for this proposition involves the construction of a directed graph \(G_{\widehat{\mathcal{M}}^{*}} = (V_{\widehat{\mathcal{M}}^{*}}, E_{\widehat{\mathcal{M}}^{*}})\), where \(V_{\widehat{\mathcal{M}}^{*}} = \{1, 2, \dots, n\}\) and \(E_{\widehat{\mathcal{M}}^{*}} = \{(i,j) \mid \widehat{\mathcal{M}}^{*}_{i,j} = 1\}\). A similar construction can be made for \(G_{\widehat{\mathcal{M}}}\). It is straightforward that \(|\widehat{\mathcal{M}}^{*}| = |E_{\widehat{\mathcal{M}}^{*}}|\), which represents the number of edges. Consequently, \(|\widehat{\mathcal{M}}| < |\widehat{\mathcal{M}}^{*}|\) implies that \(G_{\widehat{\mathcal{M}}^{*}}\) has more edges than \(G_{\widehat{\mathcal{M}}}\). Since there is no pre-defined ordering information for the nodes in these two graphs, if we wish to compare their edges, we need to first establish a mapping. However, if \(|E_{\widehat{\mathcal{M}}}| < |E_{\widehat{\mathcal{M}}^{*}}|\), no matter how the mapping \(\sigma\) is constructed, there must be an index pair \((i,j)\) such that \((i,j) \notin E_{\widehat{\mathcal{M}}}\) but \((\sigma(i),\sigma(j)) \in E_{\widehat{\mathcal{M}}^{*}}\). Otherwise, if such an index pair does not exist, the total number of edges in \(G_{\widehat{\mathcal{M}}}\) would necessarily be greater than or equal to that in \(G_{\widehat{\mathcal{M}}^{*}}\), contradicting the premise that \(|\widehat{\mathcal{M}}| < |\widehat{\mathcal{M}}^{*}|\).
\end{proof}

\begin{lemma}[Non-decreasing Complexity]
    Suppose transitions are weakly diverse lossy as defined in Def.~\ref{def:Diversely Lossy Transformation} and an invertible transformation $\mathbf{h}$ maps the optimal estimation $\hat{\mathbf{z}}^{*}_t$ to the estimated $\hat{\mathbf{z}}_t$, and it is neither a permutation nor a component-wise transformation. Then, the transition complexity on the estimated $\hat{\mathbf{z}}_t$ is not lower than that on the optimal $\hat{\mathbf{z}}^{*}_t$, i.e., 
    \[
    |\widehat{\mathcal{M}}| \geq |\widehat{\mathcal{M}}^{*}|.
    \]
\end{lemma}
\begin{proof}

\begin{figure}[ht]
    \centering
    \begin{subfigure}{0.45\textwidth}
        \centering
        \includegraphics[width=0.8\linewidth]{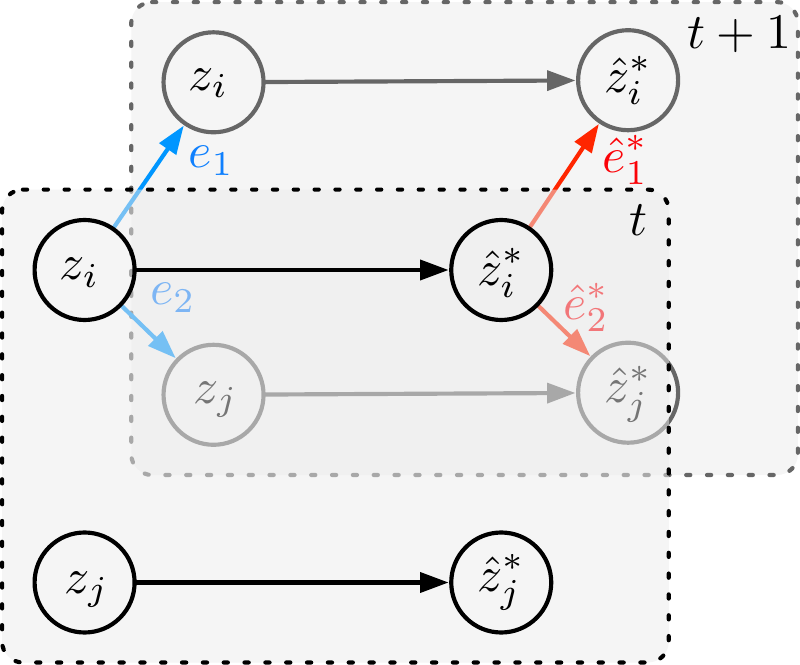}
        \caption{Transition graph for ground truth \(\mathbf{z}_t\) and optimal estimation \(\hat{\mathbf{z}}^{*}_t\)}
        \label{fig:lemma-opt}
    \end{subfigure}%
    \hspace{1cm}
    \begin{subfigure}{0.45\textwidth}
        \centering
        \includegraphics[width=0.8\linewidth]{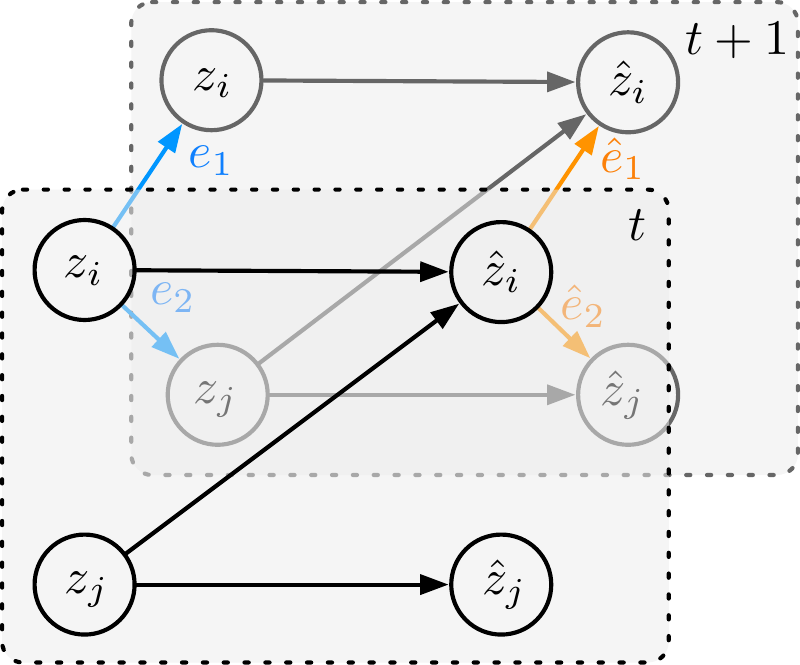}
        \caption{Transition graph for ground truth \(\mathbf{z}_t\) and  arbitrary estimation \(\hat{\mathbf{z}}_t\)}
        \label{fig:lemma-est}
    \end{subfigure}
    \caption{A partial observation of the transition graph among ground truth \(\mathbf{z}_t\), optimal estimation \(\hat{\mathbf{z}}^{*}_t\) and arbitrary estimation \(\hat{\mathbf{z}}_t\). For brevity, the index permutation is assumed to be identity, i.e., \(\sigma(i)=i\).}
    \label{fig:non-decreasing complexity}
\end{figure}

The entire proof is based on contradiction. In Figure~\ref{fig:non-decreasing complexity}, we provide an illustrative example. Note that the mapping from ground truth \(\mathbf{z}_t\) to optimal estimation \(\hat{\mathbf{z}}^{*}_t\) is a permutation and element-wise transformation, it does not include mixing, and hence \(\color{blue}{e_i}\) exists if and only if \(\color{red}{\hat{e}^{*}_i}\) exists. Therefore, $|\widehat{\mathcal{M}}^{*}| = |\mathcal{M}|$. The core of the proof requires us to demonstrate that $|\widehat{\mathcal{M}}| < |\mathcal{M}|$ cannot be true.


Suppose the transitions are weakly diverse lossy as defined in Def.~\ref{def:Diversely Lossy Transformation}, then for each edge $z_{t,i} \to z_{t+1,j}$ in the transition graph, there must be a region of $z_{t,i}$ such that only $z_{t+1,j}$ is influenced by $z_{t,i}$. Consequently, the corresponding $\hat{z}_{t+1,j}$ and $\hat{z}_{t+1,i}$ are not independent, since no mixing process can cancel the influence of $z_{t,i}$. Therefore, the edge $\hat{z}_{t,i} \to \hat{z}_{t+1,j}$ in the estimated graph must exist.

Note that without the weakly diverse lossy transition assumption, this argument may not hold. For example, if $\hat{z}_{t+1,j}$ can be expressed as a function that does not depend on $z_{t,i}$, then even though the edge $z_{t,i} \to z_{t+1,j}$ exists, the estimated edge $\hat{z}_{t,i} \to \hat{z}_{t+1,j}$ may not exist. This could occur if, after the transformation $\mathbf{h}$, the influences in different paths from $z_{t,i}$ to $\hat{z}_{t+1,j}$ cancel out with each other.

\paragraph{Necessity Example}
An example that violates the assumption is as follows:
\begin{align}
    z_{t+1,i} &= z_{t,i} + \epsilon_{t+1,i} \nonumber \\
    z_{t+1,j} &= z_{t,i} + \epsilon_{t+1,j} \nonumber \\
    \hat{z}_{i} &= z_{i} \nonumber \\
    \hat{z}_{j} &= z_{i} - z_{j} \nonumber
\end{align}
Here, the mapping from $\mathbf{z}$ to $\hat{\mathbf{z}}$ is invertible. Writing down the mapping from $\hat{\mathbf{z}}_t$ to $\hat{\mathbf{z}}_{t+1}$, particularly for $\hat{z}_{t+1, j}$, yields:
\begin{align}
    \hat{z}_{t+1, j} &= (z_{t,i} + \epsilon_{t+1,i}) - (z_{t,i} + \epsilon_{t+1,j}) \nonumber \\
    &= \epsilon_{t+1,i} - \epsilon_{t+1,j} \nonumber
\end{align}
Clearly, this is independent of $\hat{z}_{t,i}$. Hence, in this scenario, the edge on the estimated graph does not exist. This explains the necessity for the weakly diverse lossy transition assumption. Furthermore, it can be seen that violating the weakly diverse lossy transition assumption would require a very specific design, such as the transition in an additive noise case and the transition on $z$ being linear, which is usually not the case in real-world scenarios. Generally, this requires that the influences from different paths from $z_{t,i}$ to $\hat{z}_{t+1,j}$ cancel each other out, a condition that is very challenging to fulfill in practical settings.

\paragraph{Permutation Indexing}
One may also ask about the permutation of the index between $\mathbf{z}_t$ and $\hat{\mathbf{z}}_t$. Since the transformation $\mathbf{h}$ is invertible, the determinant of the Jacobian should be nonzero, implying the existence of a permutation $\sigma$ such that
\[
(i,\sigma(i)) \in \operatorname{supp}(\mathbf{J}_{\mathbf{h}}),\, \forall i \in [n].
\]
Otherwise, if there exists an $i$ such that $[\mathbf{J}_{\mathbf{h}}]_{i,\cdot} = \mathbf{0}$ or $[\mathbf{J}_{\mathbf{h}}]_{\cdot,i} = \mathbf{0}$, such a transformation cannot be invertible. We can utilize this permutation $\sigma$ to pair the dimensions in $\mathbf{z}_t$ and $\hat{\mathbf{z}}_t$.

Since each ground-truth edge is preserved in the estimated graph, by Proposition~\ref{prop: s1}, the inequality $|\widehat{\mathcal{M}}| < |\widehat{\mathcal{M}}^{*}|$ cannot hold true. Thus, the lemma is proved.
\end{proof}

Then, according to this lemma, the transition complexity $|\widehat{\mathcal{M}}_{\hat{u}}|$ of the learned $\hat{\mathbf{m}}_{\hat{u}}$ should be greater than or equal to $|\widehat{\mathcal{M}}^{*}_{\hat{u}}|$, which is the complexity when using an accurate estimation of $\hat{\mathbf{g}}^{*}$. This relationship can be expressed as follows:

\[
 | \widehat{\mathcal{M}}_{\hat{u}} | \geq  |\widehat{\mathcal{M}}^{*}_{\hat{u}}| .
\]

By Lemma~\ref{lemma: id of C under optimal g}, the expected complexity of the estimated model $\mathbb{E}_{\mathcal{D}} |\hat{\mathcal{M}}^{*}_{\hat{u}} |$ is strictly larger than that of the ground truth $\mathbb{E}_{\mathcal{D}} | \mathcal{M}_u |$. This implies the following inequality:
\begin{equation}
    \mathbb{E}_{\mathcal{D}} | \widehat{\mathcal{M}}_{\hat{u}} | \geq \mathbb{E}_{\mathcal{D}} |\widehat{\mathcal{M}}^{*}_{\hat{u}}| > \mathbb{E}_{\mathcal{D}} | \mathcal{M}_u | .
\end{equation}

However, Assumption~\eqref{as: Mechanism Sparsity} requires that the estimated complexity must be less than or equal to the ground-truth complexity, leading to a contradiction. This contradiction implies that the estimated $\hat{\mathcal{C}}$ must match the ground truth up to label swapping. Consequently, this supports the conclusion of Theorem~\ref{thm: identifiability of C}.
\end{proof}

\subsection{Proof of Corollary~\ref{cor: identifiability of domain variables under mechanism function variability}}
\label{ap:Proof of Corollary}
\begin{corollary}[Identifiability under Function Variability]
    Suppose the data \(\mathcal{D}\) is generated from the nonstationary data generation process described in \eqref{Eq:mixing function} and \eqref{Eq:transition function of z}. Assume the transitions are weakly diverse lossy (Def.~\ref{def:Diversely Lossy Transformation}), and the mechanism separability assumption~\ref{as: Mechanism Separability} along with the following assumptions hold:
    \begin{enumerate}[label=\roman*.,ref=\roman*, start=5]
    \item \underline{(Mechanism Function Variability)} Mechanism Functions are sufficiently different. There exists \(K \in \mathbb{N}\) such that for all $u\neq u'$, there exists \(k \leq K\), $\mathcal{M}_{u}^k \neq \mathcal{M}_{u'}^k$ i.e. there exists index $(i,j)$ such that \(\left[\mathcal{M}_{u}^k\right]_{i,j} \neq \left[\mathcal{M}_{u'}^k\right]_{i,j}\).
    \item \underline{(Higher Order Mechanism Sparsity)} The estimated transition complexity on dataset \(\mathcal{D}\) is no more than ground truth transition complexity,
    \small\begin{equation}
    \mathbb{E}_{\mathcal{D}} \sum_{k=1}^K | \widehat{\mathcal{M}}_{\hat{u}}^k | \leq \mathbb{E}_{\mathcal{D}}  \sum_{k=1}^K | \mathcal{M}_{u}^k |. 
    \end{equation}\normalsize
    \end{enumerate}
    Then the domain variables $u_t$ are identifiable up to label swapping (Def.~\ref{def: Identifiable Domain Variables}).
\end{corollary}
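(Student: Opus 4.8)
The plan is to mirror the two-step structure of the proof of Theorem~\ref{thm: identifiability of C}, but with the Jacobian support matrix $\mathcal{M}_u$ replaced throughout by the stacked collection of higher-order partial derivative support matrices $\bigl(\mathcal{M}_u^1,\dots,\mathcal{M}_u^K\bigr)$ and the complexity functional $|\mathcal{M}_u|$ replaced by $\sum_{k=1}^K|\mathcal{M}_u^k|$. First I would re-run the ``optimal $\hat{\mathbf{g}}^*$'' argument (Lemma~\ref{lemma: id of C under optimal g}): if $\hat{\mathcal{C}}$ merges two distinct true domains $i\neq j$ into a single estimated domain $k$, then the learned transition $\hat{\mathbf{m}}_k$ must simultaneously fit the dynamics of both, so for every order $k'\le K$ we get $\widehat{\mathcal{M}}_k^{k'} = \mathcal{M}_i^{k'} + \mathcal{M}_j^{k'}$ (binary sum), by the same lossy-region perturbation argument used in the original proof applied to $k'$-th derivatives. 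Summing over $k'$ and invoking Mechanism Function Variability (Assumption~\ref{as: Mechanism Function Variability}) — which guarantees that for some order $k^*\le K$ the matrices $\mathcal{M}_i^{k^*}$ and $\mathcal{M}_j^{k^*}$ differ in at least one entry — gives $\sum_{k'} |\widehat{\mathcal{M}}_k^{k'}| > \sum_{k'}|\mathcal{M}_i^{k'}|$ and similarly for $j$, hence a strict increase of the expected higher-order complexity over the data region where $\hat{\mathcal{C}}$ is wrong; this contradicts Higher Order Mechanism Sparsity (Assumption~\ref{as: Higher Order Mechanism Sparsity}).

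Second, I would lift this to arbitrary $\hat{\mathbf{g}}$ exactly as in Appendix~\ref{ap: id of C under wrong g}: writing $\mathbf{h} = \hat{\mathbf{g}}^{-1}\circ\hat{\mathbf{g}}^*$, I need the higher-order analogue of the Non-decreasing Complexity lemma, namely that if $\mathbf{h}$ is not a permutation composed with a component-wise map, then $\sum_k|\widehat{\mathcal{M}}^k| \ge \sum_k|\widehat{\mathcal{M}}^{*,k}|$. The key observation is that the weakly diverse lossy transition property still supplies, for each ground-truth edge $z_{t,i}\to z_{t+1,j}$, an open region on which $z_{t,i}$ influences only $z_{t+1,j}$; on that region no mixing in $\mathbf{h}$ can cancel the dependence, so the estimated edge $\hat{z}_{t,i}\to\hat{z}_{t+1,j}$ is present — and moreover one can arrange the region so that the dependence is genuinely nonlinear of order up to $k^*$, so the edge appears in $\widehat{\mathcal{M}}^{k'}$ for the relevant orders $k'$. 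Then Proposition~\ref{prop: s1}, applied orderwise (or to the total edge count after unioning over orders), rules out a strict decrease. Combining the two steps yields that the global minimizer of the higher-order complexity forces $\hat{\mathcal{C}} = \sigma\circ\mathcal{C}$, i.e. identifiability of $u_t$ up to label swapping.

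The main obstacle I anticipate is the higher-order version of the edge-preservation claim inside the Non-decreasing Complexity lemma. In the first-order case it suffices that $\partial \hat{z}_{t+1,j}/\partial\hat{z}_{t,i}\neq 0$ somewhere; in the higher-order case I must ensure that if $\mathcal{M}_u^{k^*}$ has a $1$ in position $(i,j)$ — meaning $\partial^{k^*} m_j/\partial z_i^{k^*}\neq 0$ at some point — then after composition with an invertible $\mathbf{h}$ the corresponding higher-order derivative of $\hat{z}_{t+1,j}$ with respect to $\hat{z}_{t,i}$ is still nonzero on a region, rather than being annihilated by a fortuitous cancellation through the chain rule (which now mixes derivatives of all orders $\le k^*$). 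The honest resolution is the same ``measure-zero of pathological cancellations'' / genericity flavor already invoked in the Necessity Example of the original proof: exact cancellation of all higher-order contributions along all paths would impose non-generic algebraic constraints on $\mathbf{h}$ and the transitions, which the weakly diverse lossy assumption (diverse, non-degenerate lossy regions) is designed to preclude. I would make this precise by working on the open region $S_{t,i}\setminus(\cup_{j'\neq j}\,\cdot)$ where only the $(i,j)$ path is active, so that the chain rule reduces to a single-path composition $\hat{z}_{t+1,j} = (\text{component-wise in }\mathbf{h})\circ m_j\circ(\text{component-wise in }\mathbf{h}^{-1})$, whose $k$-th derivative in $\hat{z}_{t,i}$ inherits a nonzero term of order $\le k^*$ from $m_j$ — thereby avoiding multi-path cancellation entirely, just as in the first-order argument. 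Everything else is bookkeeping: the integral decomposition over $\mathcal{X}\times\mathcal{X}$, the ``more domains than ground truth'' versus ``exactly matching'' case split, and the final contradiction with Assumption~\ref{as: Higher Order Mechanism Sparsity} all carry over verbatim with $|\cdot|$ replaced by $\sum_{k=1}^K|\cdot^k|$.
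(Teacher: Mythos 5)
Your overall scaffolding matches the paper's: reduce the problem to a non-decreasing-complexity statement under the invertible map \(\mathbf{h}\), use the weakly diverse lossy regions to isolate a single path so the composition becomes component-wise, argue that a merged (wrongly clustered) domain forces the estimated support to be the binary sum of the two ground-truth supports, and then contradict Assumption~\ref{as: Higher Order Mechanism Sparsity}. (A minor structural difference: the paper's Lemma~\ref{lemma: Non-decreasing Complexity Under Mechanism Function Variability} compares the estimate directly against the ground-truth \(\mathbf{z}_t\) rather than against the optimal \(\hat{\mathbf{g}}^{*}\), but that is cosmetic.)

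The genuine gap is exactly the step you flag yourself: showing that after conjugation by the component-wise pieces of \(\mathbf{h}\), the higher-order derivative supports of an edge cannot drop. Your resolution — that the \(k\)-th derivative ``inherits a nonzero term of order \(\le k^{*}\) from \(m_j\)'' plus a ``measure-zero of pathological cancellations'' appeal — is not a proof, and genericity cannot be invoked here at all: \(\mathbf{h}=\hat{\mathbf{g}}^{-1}\circ\hat{\mathbf{g}}^{*}\) is induced by the learned \(\hat{\mathbf{g}}\), which is arbitrary among observationally equivalent models, so the identifiability claim must rule out even a \(\mathbf{h}\) chosen precisely to realize a Fa\`a di Bruno cancellation; the Necessity Example in the paper is a caution about when cancellation can occur, not a license to dismiss it as non-generic. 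The paper closes this deterministically: it first notes that a difference in the order-\(k\) supports between two domains (Assumption~\ref{as: Mechanism Function Variability}) means that along the relevant edge one domain's transition is a polynomial of degree \(K-1\) (all supports \(1\) up to order \(K-1\), \(0\) from order \(K\) on), and then proves Proposition~\ref{prop:S3}: a degree-\(k\) polynomial conjugated by any invertible smooth \(h\), i.e.\ \(h^{-1}\circ f\circ h\), cannot equal a polynomial of strictly lower degree, by counting roots of \(\hat f(x)=C\) versus \(f(x)=C'\) using monotonicity of \(h\). This root-counting argument is what guarantees \([\widehat{\mathcal{M}}^{k}]_{i,j}\) cannot vanish for the orders below \(K\), hence the non-decreasing total complexity, with no genericity assumption. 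Without Proposition~\ref{prop:S3} (or an equivalent deterministic order-preservation argument), your step 2 does not go through, and consequently neither does the strict-increase contradiction in step 1 for the merged-domain case.
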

\begin{proof}



With a strategy similar to the proof of Theorem~\ref{thm: identifiability of C}, we aim to demonstrate that using an incorrect cluster assignment $\hat{\mathcal{C}}$ will result in $\sum_{k=1}^K |\mathcal{M}^k_{\hat{u}}|$ being higher than the ground truth, thereby still enforcing the correct $u_t$.

Differing slightly from the approach in Theorem~\ref{thm: identifiability of C}, in this setting, we will first demonstrate that under any arbitrary $\hat{\mathcal{C}}$ assignment, the estimated complexity is no lower than the complexity in the ground truth, i.e., $\sum_{k=1}^K|\widehat{\mathcal{M}}^k| \geq \sum_{k=1}^K|\mathcal{M}^k|$. 

First, we address the scenario where two different domains have the same transition graph but with different functions, as otherwise, the previous lemma~\ref{lemma: id of C under optimal g} still applies. In cases where the same transition causal graph exists but the functions differ, assumption~\ref{as: Mechanism Function Variability} indicates that there exists an integer $k$ such that $\mathcal{M}^k_{u} \neq \mathcal{M}^k_{u'}$, meaning the ground truth support matrices are different. However, due to incorrect clustering, the learned transition must cover both cases. To substantiate this claim, we need to first introduce an extension of the non-decreasing complexity lemma.

\begin{lemma}[Non-decreasing Complexity under Mechanism Function Variability]\label{lemma: Non-decreasing Complexity Under Mechanism Function Variability}
    Suppose there exists an invertible transformation $\mathbf{h}$ which maps the ground truth $\mathbf{z}_t$ to the estimated $\hat{\mathbf{z}}_t$, and it is neither a permutation nor a component-wise transformation. Then, the transition complexity on the estimated $\hat{\mathbf{z}}_t$ is not lower than that on the ground truth $\mathbf{z}_t$, i.e., 
    \[
    \sum_{k=1}^K|\widehat{\mathcal{M}}^k| \geq \sum_{k=1}^K|\mathcal{M}^k|.
    \]
\end{lemma}
\begin{proof}
    We can extend the notation of the edges $e$ to the higher-order case $e^k$ to represent the existence of a non-zero value for the $k$-th order partial derivative $\frac{\partial^k m_i}{\partial z_j^k}$. Under the weakly diverse lossy transition assumption, it is always possible to find a region where the influence in $e^k$ cannot be canceled in $\hat{e}^k$. In this region, the mapping from $z_i$ to $\hat{z}_i$ can be treated as a component-wise transformation, since the influence of $\mathbf{z}$ other than $z_i$ is zero due to the lossy transition assumption. It is important to note that there is also an indexing permutation issue between $z_i$ and $\hat{z}_i$; the same argument in the permutation indexing part of the proof of lemma~\ref{lemma: Non-decreasing Complexity Under Mechanism Function Variability} applies.

    Since $\mathcal{M}^k$ represents the support of the $k$-th order partial derivative, this implies that $[\mathcal{M}^k]_{i,j} =1$ implies $[\mathcal{M}^{k'}]_{i,j} =1$ for all $k' \leq k$. We aim to show that if for the transition behind edge $z_{t,j} \to z_{t+1,i}$, there exists a $K$ such that $[\mathcal{M}^k_u]_{i,j}$ are different for two domains, then one of them must be a polynomial with order $K-1$. For this domain, $[\mathcal{M}^k_u]_{i,j} = 1$ when $k = 1, 2, \dots, K-1$ and $[\mathcal{M}^k_u]_{i,j} = 0$ when $k \geq K$.

    To demonstrate that the non-decreasing complexity holds, we need to show that after an invertible transformation $h$ to obtain the estimated version, $[\widehat{\mathcal{M}}^k_u]_{i,j}$ cannot be zero for $k < K-1$, which can be shown with the following proposition.

    \begin{proposition}\label{prop:S3}
        Suppose $f$ is a polynomial of order $k$ with respect to $x$. Then, for any invertible smooth function $h$, the transformed function $\hat{f} \coloneq h^{-1} \circ f \circ h$ cannot be expressed by a polynomial of order $k'$, when $k' < k$.
    \end{proposition}
    \begin{proof}
        Let's prove it by contradiction. Suppose $\hat{f} \coloneq h^{-1} \circ f \circ h$ can be expressed as a polynomial of order $k' < k$. It follows that the function $\hat{f}(x) = C$ has $k'$ roots (repeated roots are allowed), since $h$ is invertible. Therefore, $h \circ \hat{f}(x) = h(C)$ also has the same number of $k'$ roots. By definition, $h \circ \hat{f} = f \circ h$, which means $f \circ h = h(C)$ has $k'$ roots. However, since $h$ is invertible, or equivalently it is monotonic, the equation $f \circ h = h(C)$ having $k'$ roots implies that $f(x) = C'$ has roots $k'$. Yet, since $f$ is a polynomial of order $k$, it must have $k$ roots, contradicting the fundamental theorem of algebra, which means that they cannot have the same number of roots. Hence, the proposition holds.
    \end{proof}

    The advantage of support matrix analysis is that, provided there exists at least one region where the support matrix is non-zero, the global version on the entire space will also be non-zero. Based on the definition of diverse lossy transition in Def.~\ref{def:Diversely Lossy Transformation}, it is always possible to identify such a region where for an edge \(z_{t,i} \to z_{t+1,j}\), the mapping from \(z_{t+1,j}\) to \(z_{t+1,j}\) can be treated as a component-wise relationship. This is because no other variables besides \(z_{t+1,j}\) change in conjunction with \(z_{t,i}\) to cancel the effect. Therefore, proposition~\ref{prop:S3} applies, and as a result, the complexity is nondecreasing. Thus, the lemma is proved.
\end{proof}

With this lemma, we have shown that for an arbitrary incorrect domain partition result, the induced ground-truth transition complexity is preserved after the invertible transformation $\mathbf{h}$. This partition effectively combines two regions, as illustrated in Figures~\ref{fig: wrong C} and \ref{fig: hardness mechanism function variability}. Consequently, the transition complexity has the following relationship:
\[
\widehat{\mathcal{M}}^k_{\hat{u}} = \mathcal{M}^k_{u} + \mathcal{M}^k_{u'}.
\]
Here, \(u\) and \(u'\) represent the ground-truth values of the domain variables, and \(\hat{u}\) denotes the estimated version, defined as the binary addition of the two ground truths.

By assumption~\ref{as: Mechanism Function Variability}, the two ground truth transitions' complexity $\mathcal{M}^k_{u}, \mathcal{M}^k_{u'}$ are different, then with the same arguments in the proof of the lemma~\ref{lemma: id of C under optimal g}, we can show that the expected transition complexity with wrong domain assignment over the whole dataset is strictly larger than the ground truth complexity with correct domain assignment. And it is easy to see that when the estimated latent variables are equal to the ground truth, $\hat{\mathbf{z}}_t = \mathbf{z}_t$ then the lower bound is achieved when the estimated domains are accurate. Note that this argument is not a sufficient condition to say that the estimated $\hat{\mathbf{z}}_t$ is exactly the ground truth $\mathbf{z}_t$ or an optimal estimation of it, since there can be other formats of mapping from $\mathbf{z}_t$ to $\hat{\mathbf{z}}_t$ that generate the same complexity. But this is sufficient to prove that by pushing the complexity to small, the domain variables $u_t$ must be recovered up to label swapping. This concludes the proof.
\end{proof}

\subsection{Proof of Theorem~\ref{thm: identifiability of z}}\label{ap:Proof of Identifiability of the Latent Causal Processes}
\begin{theorem}[Identifiability of the Latent Causal Processes]
    Suppose that the data \(\mathcal{D}\) is generated from the nonstationary data generation process~\eqref{Eq:mixing function}, \eqref{Eq:transition function of z}, which satisfies the conditions in Theorem~\ref{thm: identifiability of C} and Lemma~\ref{lemma:tdrl identifiability_of_z}, then the domain variables \(u_t\) are identifiable up to label swapping~(Def.~\ref{def: Identifiable Domain Variables}) and latent variables $\mathbf{z}_t$ are identifiable up to permutation and a component-wise transformation~(Def.~\ref{def: Identifiable Latent Causal Processes}).
\end{theorem}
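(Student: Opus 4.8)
The plan is to prove the theorem by composing the two results already established, using the recovered domain variables as the bridge. The common hypothesis of both ingredients is observational equivalence (Def.~\ref{def: Observational Equivalence}): the learned model $(\hat{\mathbf{m}}, \hat{\mathbf{u}}, \hat{p}_{\boldsymbol{\epsilon}}, \hat{\mathbf{g}})$ reproduces the data distribution of $\{\mathbf{x}_t\}_{t=1}^T$ everywhere. From this single assumption, together with the weakly diverse lossy transition property and assumptions~\ref{as: Mechanism Separability}--\ref{as: Mechanism Variability}, Theorem~\ref{thm: identifiability of C} applies verbatim and produces a permutation $\sigma$ on domain labels with $\hat{u}_t = \sigma(u_t)$ for every $t \in \{1,\dots,T\}$. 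This conclusion does not depend on the quality of $\hat{\mathbf{g}}$, so I would invoke it first and then relabel: without loss of generality replace $\hat{\mathbf{u}}$ by $\sigma^{-1}\circ\hat{\mathbf{u}}$, so that $\hat{u}_t = u_t$ pointwise on the data support.

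Second, I would argue that the problem has collapsed to nonstationary nonlinear ICA with an \emph{observed} domain index, which is exactly the regime covered by Lemma~\ref{lemma:tdrl identifiability_of_z}. Concretely, since the estimated domain assignment coincides with the ground truth on the data support, full-sequence observational equivalence factorizes domain by domain: for each $u$ and on the set where $u_t = u$ has positive density, the learned conditional $p_{\hat{\mathbf{m}},\hat{\mathbf{g}}}(\mathbf{x}_t \mid \mathbf{x}_{t-1}, u)$ matches the true $p_{\mathbf{m},\mathbf{g}}(\mathbf{x}_t \mid \mathbf{x}_{t-1}, u)$. The learned model additionally satisfies, by construction of the prior network (and as required in the statement of the lemma), that the components of $\hat{\mathbf{z}}_t = \hat{\mathbf{g}}(\mathbf{x}_t)$ are mutually independent conditional on $\hat{\mathbf{z}}_{t-1}$ in each context. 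Feeding these facts, together with the sufficient variability condition that the $2n$ function vectors $\mathbf{s}_{k,t}$ and $\mathring{\mathbf{s}}_{k,t}$ are linearly independent (inherited as a hypothesis from Lemma~\ref{lemma:tdrl identifiability_of_z}), into that lemma yields that $\hat{\mathbf{z}}_t$ is a permuted invertible component-wise transformation of $\mathbf{z}_t$; equivalently $\hat{\mathbf{g}}^{-1}(\mathbf{x}_t) = \mathcal{T}\circ\pi\circ\mathbf{g}^{-1}(\mathbf{x}_t)$ for all $\mathbf{x}_t \in \mathcal{X}$, which is precisely Def.~\ref{def: Identifiable Latent Causal Processes}. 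Combined with the first step this gives both conclusions, and, as noted earlier in the paper, identifiability of $\mathbf{z}_t$ up to permutation and component-wise transformation carries with it identifiability of the latent time-delayed causal structure, since component-wise invertible maps preserve conditional independence relations.

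The main obstacle, and the step deserving the most care, is the interface between the two ingredients: one must be sure that the \emph{same} learned model simultaneously (i) satisfies the assumptions of Theorem~\ref{thm: identifiability of C}, so that $\hat{u}_t = \sigma(u_t)$, and (ii) satisfies the conditional-independence and sufficient-variability hypotheses of Lemma~\ref{lemma:tdrl identifiability_of_z}. This is legitimate because Theorem~\ref{thm: identifiability of C} constrains only $(\hat{\mathbf{m}}_{\hat u}, \hat{\mathcal{C}})$ through transition complexity and leaves $\hat{\mathbf{g}}$ essentially free, whereas the lemma imposes \emph{additional} structure on that same $\hat{\mathbf{g}}$; there is no circular dependency. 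The genuinely delicate technical point is justifying the ``factorizes domain by domain'' reduction: one needs that $\hat{u}$, a deterministic function of $(\mathbf{x}_{t-1},\mathbf{x}_t)$, agrees with the true $u_t$ on a positive-measure subset of each domain, so that matching full-sequence distributions forces matching of the per-domain transition densities on which the lemma operates. That consistency is exactly the content of the label-swapping identifiability established in the first step, so the chaining closes; the remaining bookkeeping (handling the $\sigma$ relabeling, and null sets where $p_{\mathcal{D}} = 0$ so the clustering is ill-defined) is routine.
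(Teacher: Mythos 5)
Your proposal is correct and follows essentially the same route as the paper: the paper's proof is a one-line composition that first invokes Theorem~\ref{thm: identifiability of C} to recover \(u_t\) up to label swapping and then feeds the recovered domain variables into Lemma~\ref{lemma:tdrl identifiability_of_z} to obtain identifiability of \(\mathbf{z}_t\) up to permutation and component-wise transformation. Your additional care about the interface (relabeling by \(\sigma\), per-domain factorization of the matched distributions, and checking the conditional-independence hypotheses on the same learned model) is a more explicit spelling-out of what the paper leaves implicit, not a different argument.
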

\begin{proof}
From Theorem~\ref{thm: identifiability of C}, the domain variables \(u_t\) are identifiable up to label swapping, and then use the estimated domain variables in Lemma~\ref{lemma:tdrl identifiability_of_z}, the latent causal processes are also identifiable, that is, \(\mathbf{z}_t\) are identifiable up to permutation and a component-wise transformation.
\end{proof}

\subsection{Discussion on Assumptions}\label{ap: Discussion on Assumptions}
\subsubsection{Mechanism Separability}
Note that we assume that there exists a ground truth mapping \(\mathcal{C}: \mathcal{X} \times \mathcal{X} \to \mathcal{U}\), gives a domain index based on \(\mathbf{x}_{t-1}, \mathbf{x}_t\). The existence of such mapping means that the human can tell what the domain is based on two consecutive observations. If two observations are not sufficient, then it can be modified to have more observation steps as input, for example \(\mathbf{x}_{\leq t}\) or even full sequence \(\mathbf{x}_{[1:T]}\). If the input has future observation, which means that \(\mathbf{x}_{>t}\) is included, then this is only valid for sequence understanding tasks in which the entire sequence will be visible to the model when analyzing the time step \(t\). For prediction tasks or generation tasks, further assumptions on \(\mathcal{C}\) such as the input only contains \(\mathbf{x}_{<t}\) should be made, which will be another story. Those variants are based on specific application scenarios and not directly affect our theory, for brevity, let us assume the two-step case.

\subsubsection{Mechanism Sparsity}
This is a rather intuitive assumption in which we introduce some form of sparsity in the transitions, and our task is to ensure that the estimated transition maintains this sparsity pattern. This requirement is enforced by asserting an equal or lower transition complexity as defined in Assumption~\ref{as: Mechanism Sparsity}. Similar approaches, grounded in the same intuition, are also explored in the reinforcement learning setting, as discussed in works by Lachapelle et al.~\cite{lachapelle2024nonparametric} and Hwang et al.~\cite{hwang2024quantized}. The former emphasizes the identifiability result of the independent components, which necessitates additional assumptions. In contrast, the latter focuses on the RL scenario, requiring the direct observation of the latent variables involved in the dynamics, which leaves significant challenges in real-world sequence-understanding tasks, where the states are latent. And it is also extensively discussed in the nonlinear ICA literature~\cite{zheng2022on,zheng2023generalizing}, in which such a sparsity constraint was added to the mixing function.

\subsubsection{Mechanism Variability}
The assumption of mechanism variability requires that causal dynamics differ between domains, which requires at least one discrete edge variation within the causal transition graphs. This assumption is typically considered reasonable in practical contexts; humans identify distinctions between domains only when the differences are substantial, which often involves the introduction of a new mechanism or the elimination of an existing one. Specifically, this assumption requires a minimal alteration, a single edge change in the causal graph, to be considered satisfied. Consequently, as long as there are significant differences in the causal dynamics among domains, this criterion is fulfilled.

\subsubsection{Mechanism Function Variability}\label{ap:Discussion on the hardness of identifiability under mechanism function variability}
\begin{wrapfigure}{r}{5.5cm}
\vspace{-3.5em}
    \centering
    \includegraphics[width=0.35\textwidth]{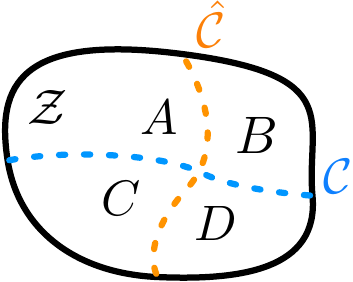}
    \caption{Correct domain separation \(\mathcal{C}\) and incorrect domain separation \(\hat{\mathcal{C}}\) of \(\mathbf{z}_{t+1}\), given a fixed \(\mathbf{z}_t\).}
    \label{fig: hardness mechanism function variability}
    \vspace{-2em}
\end{wrapfigure}
In this section, we will further discuss the mechanism function variability introduced in Corollary~\ref{cor: identifiability of domain variables under mechanism function variability}. One might question the necessity of this assumption. To illustrate this issue, we claim that if we only assume that the mechanism functions differ across domains but without this extended version of the variability assumption, i.e., for \(u \neq u'\), \(\mathbf{m}_u \neq \mathbf{m}_{u'}\), then under this proposed framework, the domain variables \(u_t\) are generally unidentifiable.

In Figure~\ref{fig: hardness mechanism function variability}, we present a simple example of the space of \(\mathbf{z}_{t+1}\) given a fixed \(\mathbf{z}_t\). For the sake of brevity, assume that there are two domains. By the mechanism separability assumption~\ref{as: Mechanism Separability}, the space \(\mathcal{Z}\) of \(\mathbf{z}_{t+1}\) is divided into two distinct parts, each corresponding to one domain. In this illustration, \(\mathcal{C}\) denotes the partition created by the ground truth transition function:
\begin{equation}
\mathbf{z}_{t+1} = 
\begin{cases} 
\mathbf{m}_1(\mathbf{z}_t,\epsilon) & \text{if } u_{t+1} = 1, \\
\mathbf{m}_2(\mathbf{z}_t,\epsilon) & \text{if } u_{t+1} = 2.
\end{cases}
\end{equation}

Then the question arises: when the domain assignment is incorrect, that is, \(\hat{\mathcal{C}} \neq \mathcal{C}\), can we still get the same observational distributions, or equivalently, can we obtain the same distribution for \(\mathbf{z}_{t+1}\)?

The answer is yes. For the ground truth transition, \(\mathbf{m}_1(\mathbf{z}_t, \epsilon) \in A \cup C\) and \(\mathbf{m}_2(\mathbf{z}_t, \epsilon) \in B \cup D\). In the case of an incorrect partition \(\hat{\mathcal{C}}\), it is sufficient to have \(\hat{\mathbf{m}}_1(\mathbf{z}_t, \epsilon) \in A \cup B\) and \(\hat{\mathbf{m}}_2(\mathbf{z}_t, \epsilon) \in C \cup D\). Ensuring that the conditional distribution \(p(\mathbf{z}_{t+1} \mid \mathbf{z}_t)\) is matched everywhere, we can create two different partitions on domains, yet still obtain exactly the same observations. That makes the domain variables \(u_t\) unidentifiable in the general case.
\paragraph{How does the previous mechanism variability assumption work?}
In the assumption of mechanism variability (Assumption~\ref{as: Mechanism Variability}), the support matrices of the Jacobian of transitions across different domains differ. Consider a scenario where the ground truth partition is \(\mathcal{C}\), denoted by \(A, C \mid B, D\). If an incorrect estimation occurs, where our estimated partition is \(\hat{\mathcal{C}}\), represented as \(A, B \mid C, D\), then the estimated transition in domain one should cover the transitions in both \(A\) and \(B\), and similarly for the second domain. This leads to an increase in complexity within the estimated Jacobian support matrix, as discussed in the previous sections. Consequently, this complexity forces the sets \(B\) and \(C\) to be empty, resulting in \(\hat{\mathcal{C}}\) converging to \(\mathcal{C}\).

\paragraph{How about mechanism function variability?}
Roughly speaking, and as demonstrated in our experiments, the mechanism variability assumption previously discussed is already sufficient to identify domain changes in both synthetic and real-world settings. This sufficiency arises because the assumption only requires a single differing spot, even though some transition functions behind some edges may persist across different domains. As long as there is one edge spot that can separate the two domains, this condition is met. In the relatively rare case where all edges in the causal dynamic transition graphs are identical across two different domains and only the underlying functions differ, we can still demonstrate identifiability in this scenario by examining differences in the support of the higher-order partial derivative matrices.

\subsubsection{Weakly Diverse Lossy Transition}
The weakly diverse lossy transition assumption requires that each variable in the latent space can potentially influence a set of subsequent latent variables, and such transformations are typically non-invertible. This implies that given the value of \(\mathbf{z}_{t+1}\), it is generally challenging to precisely recover the previous \(\mathbf{z}_t\); equivalently, this mapping is not injective. Although this assumption requires some explanation, it is actually considered mild in practice. Often in real-world scenarios, different current states may lead to identical future states, indicating a loss of information. The ``weakly diverse'' of this assumption suggests that the way information is lost varies between different dimensions, but there is some common part among them, hence the term ``weakly diverse''. In the visualization example shown in Figure~\ref{fig:vis}, we can clearly see this pattern, in which the scene is relatively simple and it is very likely that in two different frames, the configuration of the scene or the value of the latent variables are the same but their previous states are completely different.

\section{Experiment Settings}
\label{ap:experiment}
\subsection{Synthetic Dataset Generation}\label{ap:synthetic}
\begin{wraptable}{r}{5.8cm}
\vspace{-1em}
\centering
\caption{Synthetic Dataset Statistics}
\label{tab:syn_dataset_stats}
\begin{tabular}{cc}
\toprule
\textbf{Property} & \textbf{Value} \\ 
\midrule
Number of State & 5 \\ 
Dimension of \( \mathbf{z}_t \) & 8 \\ 
Dimension of \( \mathbf{x}_t \) & 8 \\ 
Number of Samples & 32,000 \\ 
Sequence Length & 15 \\ 
\bottomrule
\end{tabular}
\vspace{-1em}
\end{wraptable}
The synthetic dataset is constructed in accordance with the conditions outlined in Theorems~\ref{thm: identifiability of C} and \ref{thm: identifiability of z}. Transition and mixing functions are synthesized using multilayer perceptrons (MLPs) initialized with random weights. The mixing functions incorporate the LeakyReLU activation function to ensure invertibility. The dataset features five distinct values for the domain variables, with both the hidden variables \(\mathbf{z}_t\) and the observed variables \(\mathbf{x}_t\) set to eight dimensions. A total of 1,000 sequences of domain variables were generated. These sequences exhibit high nonstationarity across domains, which cannot be captured with a single Markov chain. This was achieved by initially generating two distinct Markov chains to generate two sequences of domain indices. Subsequently, these sequences were concatenated, along with another sequence sampled from a discrete uniform distribution over the set \(\{1, 2, 3, 4, 5\}\), representing the domain indices.


For each sequence of domain variables, we sampled a batch size of 32 sequences of hidden variables \(\mathbf{z}_t\) beginning from a randomly initialized initial state \(\mathbf{z}_0\). These sequences were generated using the randomly initialized multilayer perceptron (MLP) to model the transitions. Observations \(\mathbf{x}_t\) were subsequently generated from \(\mathbf{z}_t\) using the mixing function as specified in Eq.~\ref{Eq:mixing function}. Both the transition functions in the hidden space and the mixing functions were shared across the entire dataset. A summary of the statistics for this synthetic dataset is provided in Table~\ref{tab:syn_dataset_stats}. For detailed implementation of this data generation process, please refer to our accompanying code in Sec. \ref{sec:Reproducibility}.

\subsection{Real-world Dataset}\label{ap:real}
\paragraph{Hollywood Extended}~\cite{bojanowski2014weakly}
The Hollywood dataset contains 937 video clips with a total of 787,720 frames containing sequences of 16 different daily actions such as walking or sitting from 69 Hollywood movies. On average, each video comprises 5.9 segments, and 60.9\% of the frames are background.

\paragraph{CrossTask}~\cite{zhukov2019cross}
The CrossTask dataset features videos from 18 primary tasks. According to \cite{Lu_2021_ICCV}, we use the selected 14 cooking-related tasks, including 2552 videos with 80 action categories. On average, each video in this subset has 14.4 segments, with 74.8\% of the frames classified as background.

\subsection{Mean Correlation Coefficient}
MCC, a standard metric in the ICA literature, is utilized to evaluate the recovery of latent factors. This method initially computes the absolute values of the correlation coefficients between each ground truth factor and every estimated latent variable. Depending on the presence of component-wise invertible nonlinearities in the recovered factors, either Pearson’s correlation coefficients or Spearman’s rank correlation coefficients are employed. The optimal permutation of the factors is determined by solving a linear sum assignment problem on the resultant correlation matrix, which is executed in polynomial time.

\section{Implementation Details}
\subsection{Prior Likelihood Derivation}\label{ap:derive}
Let us start with an illustrative example of stationary latent causal processes consisting of two time-delayed latent variables, i.e., $\mathbf{z}_t = [z_{1,t}, z_{2,t}]$, i.e., $z_{i,t} = m_i(\mathbf{z}_{t-1}, \epsilon_{i,t})$ with mutually independent noises, where we omit the \(u_t\) since it is just an index to select the transition function \(m_i\). Let us write this latent process as a transformation map $\mathbf{m}$ (note that we overload the notation $m$ for transition functions and for the transformation map):

\begin{equation}
    \begin{bmatrix}
    z_{1,t-1} \\
    z_{2,t-1} \\
    z_{1,t} \\
    z_{2,t} \\
    \end{bmatrix} 
    =\mathbf{m} \left(
    \begin{bmatrix}
    z_{1,t-1} \\
    z_{2,t-1} \\
    \epsilon_{1,t} \\
    \epsilon_{2,t}
    \end{bmatrix}
    \right).
\end{equation}

By applying the change of variables formula to the map $\mathbf{m}$, we can evaluate the joint distribution of the latent variables $p(z_{1,t-1}, z_{2,t-1}, z_{1,t},z_{2,t})$ as:
\begin{equation}\label{eq:cvt-1}
p(z_{1,t-1}, z_{2,t-1}, z_{1,t},z_{2,t}) = p(z_{1,t-1}, z_{2,t-1}, \epsilon_{1,t},\epsilon_{2,t}) / \left|\det \mathbf{J}_\mathbf{m}\right|,
\end{equation}
where $\mathbf{J}_\mathbf{m}$ is the Jacobian matrix of the map $\mathbf{m}$, which is naturally a low-triangular matrix:
$$
\mathbf{J}_\mathbf{m} = 
\begin{bmatrix}
1 & 0 & 0 & 0\\
0 & 1 & 0 & 0\\
\frac{\partial z_{1,t}}{\partial z_{1,t-1}} & \frac{\partial z_{1,t}}{\partial z_{2,t-1}} & \frac{\partial z_{1,t}}{\partial \epsilon_{1,t}} & 0 \\ 
\frac{\partial z_{2,t}}{\partial z_{1,t-1}} & \frac{\partial z_{2,t}}{\partial z_{2,t-1}} & 0 & \frac{\partial z_{2,t}}{\partial \epsilon_{2,t}}
\end{bmatrix}.
$$
Given that this Jacobian is triangular, we can efficiently compute its determinant as $\prod_i \frac{\partial z_{i,t}}{\partial \epsilon_{i,t}}$. Furthermore, because the noise terms are mutually independent, and hence $\epsilon_{i,t} \perp \epsilon_{j,t}$ for $j \neq i$ and $\epsilon_t \perp \mathbf{z}_{t-1}$, we can write the RHS of Eq.~\ref{eq:cvt-1} as:

\begin{equation}\label{eq:example}
\begin{aligned}
    p(z_{1,t-1}, z_{2,t-1}, z_{1,t},z_{2,t}) &= p(z_{1,t-1}, z_{2,t-1}) \times p(\epsilon_{1,t},\epsilon_{2,t}) / \left|\det \mathbf{J}_\mathbf{m}\right| \quad (\text{because }\epsilon_t \perp \mathbf{z}_{t-1})\\
    &= p(z_{1,t-1}, z_{2,t-1}) \times \prod_i p(\epsilon_{i,t}) / \left|\det \mathbf{J}_\mathbf{m}\right| \quad (\text{because }\epsilon_{1,t} \perp \epsilon_{2,t})
\end{aligned}    
\end{equation}

Finally, by canceling out the marginals of the lagged latent variables $p(z_{1,t-1}, z_{2,t-1})$ on both sides, we can evaluate the transition prior likelihood as:

\begin{equation}\label{eq:example-likelihood}
p( z_{1,t},z_{2,t} \mid z_{1,t-1}, z_{2,t-1}) = \prod_i p(\epsilon_{i,t}) / \left|\det \mathbf{J}_\mathbf{m}\right| = \prod_i p(\epsilon_{i,t}) \times \left|\det \mathbf{J}_\mathbf{m}^{-1}\right|.
\end{equation}

Now we generalize this example and derive the prior likelihood below. 

Let $\{\hat{m}^{-1}_i\}_{i=1,2,3...}$ be a set of learned inverse transition functions that take the estimated latent causal variables, and output the noise terms, i.e., $\hat{\epsilon}_{i,t} = \hat{m}^{-1}_{i}\left(u_t, \hat{z}_{i,t}, \hat{\mathbf{z}}_{t-1} \right)$. 

Design transformation $\mathbf{A} \rightarrow \mathbf{B}$ with  low-triangular Jacobian as follows:
\small
\begin{align}
\underbrace{
\begin{bmatrix}
\hat{\mathbf{z}}_{t-1},
\hat{\mathbf{z}}_{t}
\end{bmatrix}^{\top}
}_{\mathbf{A}}
\textrm{~mapped to~}
\underbrace{
\begin{bmatrix}
\hat{\mathbf{z}}_{t-1},
\hat{\boldsymbol{\epsilon}}_{t}
\end{bmatrix}^{\top}
}_{\mathbf{B}},
~
with~
\mathbf{J}_{\mathbf{A} \rightarrow \mathbf{B}} = 
\begin{pmatrix}
\mathbb{I}_{n} & 0 \\
* & \text{diag}\left(\frac{\partial m^{-1}_{i,j}}{\partial \hat{z}_{jt}}\right)
\end{pmatrix}.
\end{align}
Similar to Eq.~\ref{eq:example-likelihood}, we can obtain the joint distribution of the estimated dynamics subspace as:
\vspace{-0.5mm}
\begin{align}
\log p(\mathbf{A}) = \underbrace{\log p\left(\hat{\mathbf{z}}_{t-1}\right) + \sum_{j=1}^n \log p(\hat{\epsilon}_{i,t})}_\text{Mutually independent noise} + \log \left(\lvert \det \left(\mathbf{J}_{\mathbf{A} \rightarrow \mathbf{B}}\right) \rvert \right) \label{eq:np-joint}.\\
\log p\left(\hat{\mathbf{z}}_t \mid \hat{\mathbf{z}}_{t-1}, u_t\right) = \sum_{i=1}^n \log p(\hat{\epsilon}_{i,t}\mid u_t)+ \sum_{i=1}^n \log \Big| \frac{\partial m^{-1}_{i}}{\partial \hat{z}_{i,t}}\Big|.
\end{align}

\subsection{Derivation of ELBO}\label{ap:ELBO}
Then the second part is to maximize the Evidence Lower BOund (\textsc{ELBO}) for the VAE framework, which can be written as:

\begin{equation}
\begin{split}
\textsc{ELBO}\triangleq &\log p_{\text{data}}(\{\mathbf{x}_t\}_{t=1}^{T}) - D_{KL}(q_{\phi}(\{\mathbf{z}_t\}_{t=1}^{T}\mid \{\mathbf{x}_t\}_{t=1}^{T})\mid\mid p_{\text{data}}(\{\mathbf{z}_t\}_{t=1}^{T}\mid \{\mathbf{x}_t\}_{t=1}^{T}))\\
=& \mathbb{E}_{\mathbf{z}_t} \log p_{\text{data}}(\{\mathbf{x}_t\}_{t=1}^{T}\mid\{\mathbf{z}_t\}_{t=1}^{T}) - D_{KL}(q_{\phi}(\{\mathbf{z}_t\}_{t=1}^{T}\mid \{\mathbf{x}_t\}_{t=1}^{T})\mid\mid p_{\text{data}}(\{\mathbf{z}_t\}_{t=1}^{T}\mid \{\mathbf{x}_t\}_{t=1}^{T}))\\
=&  \mathbb{E}_{\mathbf{z}_t}\log p_{\text{data}}(\{\mathbf{x}_t\}_{t=1}^{T}\mid\{\mathbf{z}_t\}_{t=1}^{T}) - \mathbb{E}_{\mathbf{z}_t} \left(\log q_{\phi}(\{\mathbf{z}_t\}_{t=1}^{T}\mid\{\mathbf{x}_t\}_{t=1}^{T}) - \log p_{\text{data}}(\{\mathbf{z}_t\}_{t=1}^{T}) \right]\\
=&\mathbb{E}_{\mathbf{z}_t} \left(\log p_{\text{data}}(\{\mathbf{x}_t\}_{t=1}^{T}\mid\{\mathbf{z}_t\}_{t=1}^{T}) + \log p_{\text{data}}(\{\mathbf{z}_t\}_{t=1}^{T}) - \log q_{\phi}(\{\mathbf{z}_t\}_{t=1}^{T}\mid\{\mathbf{x}_t\}_{t=1}^{T})  \right)\\
=& \mathbb{E}_{\mathbf{z}_t}\left(\underbrace{\sum_{t=1}^{T} \log p_{\text{data}}(\mathbf{x}_t\mid\mathbf{z}_t)}_{-\mathcal{L}_{\text{Recon}}} 
+ \underbrace{ \sum_{t=1}^{T}\log  p_{\text{data}}(\mathbf{z}_t\mid \mathbf{z}_{t-1},u_t) 
- \sum_{t=1}^{T}\log q_{\phi}(\mathbf{z}_t\mid\mathbf{x}_t)}_{-\mathcal{L}_{\text{KLD}}}\right)
\end{split}
\end{equation}

\subsection{Reproducibility}\label{sec:Reproducibility}
All experiments are performed on a GPU server with 128 CPU cores, 1TB memory, and one NVIDIA L40 GPU. For synthetic experiments, we run the baseline methods with implementation from \url{https://github.com/weirayao/leap} and \url{https://github.com/xiangchensong/nctrl}.
For real-world experiments, the implementation is based on \url{https://github.com/isee-laboratory/cvpr24_atba}.

Our code is also available via \url{https://github.com/xiangchensong/ctrlns}.

\subsection{Hyperparameter and Train Details}\label{sec:training details}
For synthetic experiments, the models were implemented in \texttt{PyTorch 2.2.2}. We trained the VAE network using the AdamW optimizer with a learning rate of \(5 \times 10^{-4}\) and a mini-batch size of 64. Each experiment was conducted using three different random seeds and we reported the mean performance along with the standard deviation averaged across these seeds. The coefficient for the \(L_2\) penalty term was set to \(1 \times 10^{-4}\), which yielded satisfactory performance in our experiments. We also tested \(L_1\) penalty or \(L_2\) penalty with larger coefficients, the setting we used in this paper provided the best stability and performance.
All other hyperparameters of the baseline methods follow their default values from their original implementation. For real-world experiments, we follow the same hyperparameter setting from the baseline ATBA method. In the Hollywood dataset, we used the default 10-fold dataset split setting and calculated the mean and standard derivation from those 10 runs. For the CrossTask dataset, we calculate the mean and standard derivation using five different random seeds.

\section{Visualization on Action Segmentation}\label{ap:vis}
We visualize some examples from the Hollywood dataset. As shown in Figure~\ref{fig:vis} we can see that our \method~can estimate the actions more accurate than baseline method.
\begin{figure}[ht]
    \centering
    \begin{subfigure}[b]{0.48\textwidth}
        \centering
        \includegraphics[width=\textwidth]{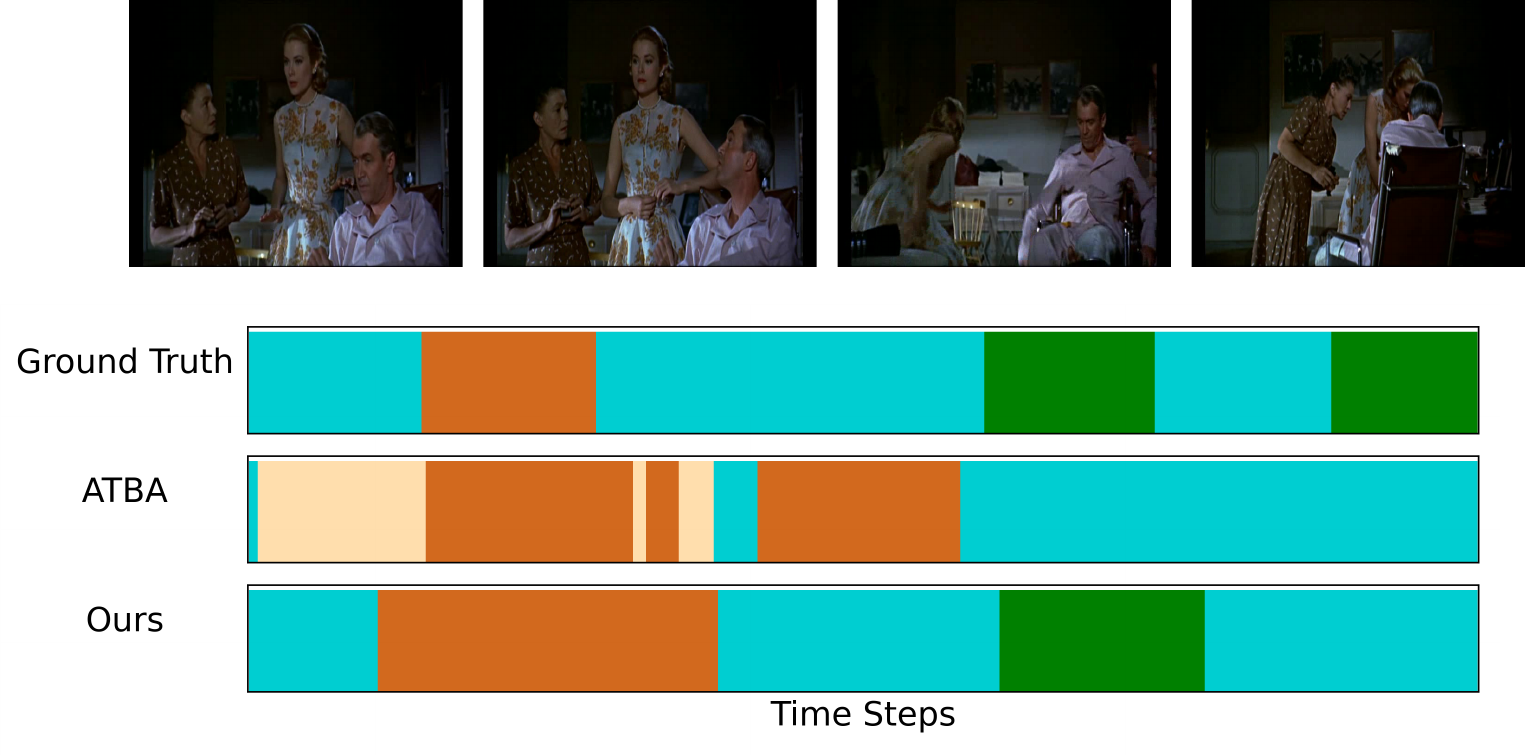}
        \label{fig:vis1}
    \end{subfigure}
    \hfill
    \begin{subfigure}[b]{0.48\textwidth}
        \centering
        \includegraphics[width=\textwidth]{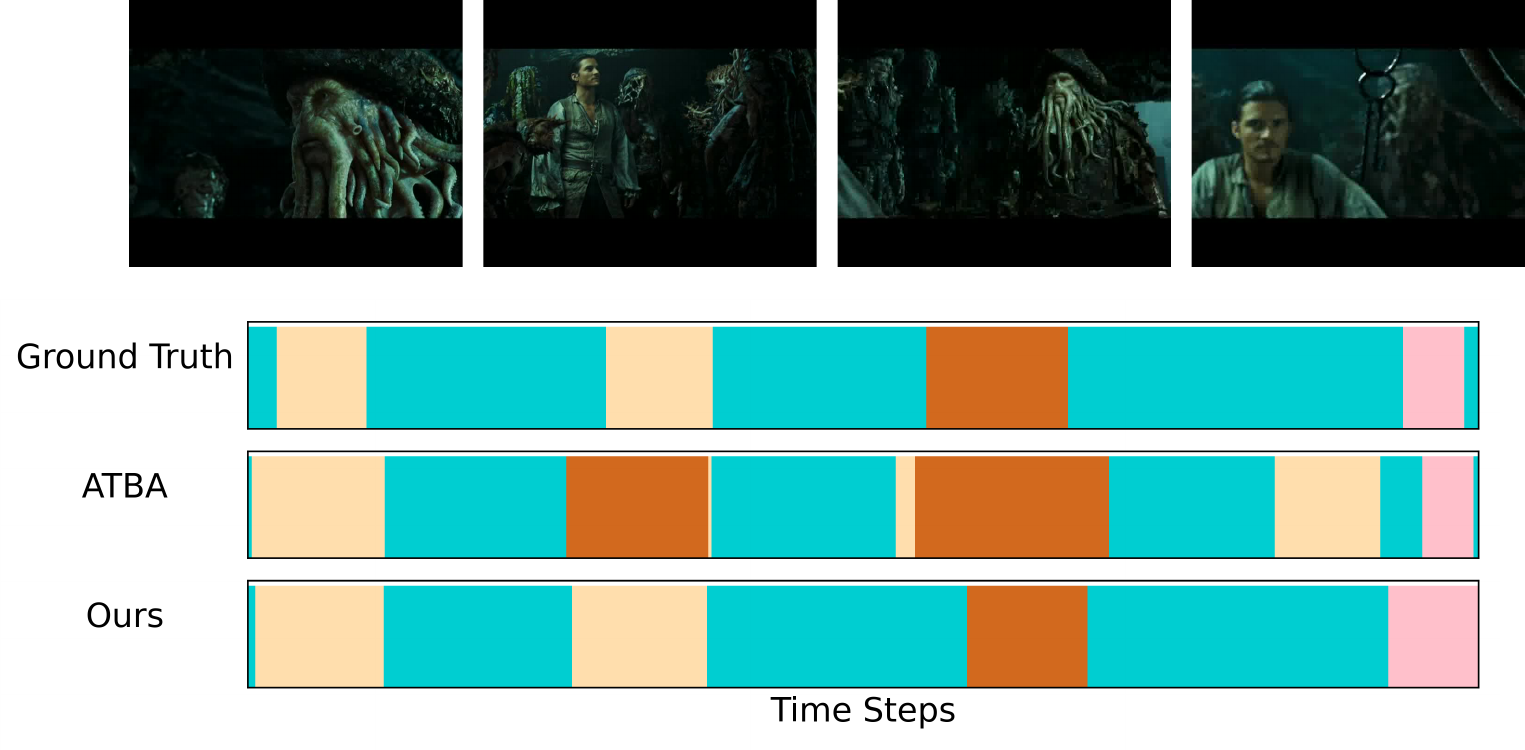}
        \label{fig:vis2}
    \end{subfigure}
    \caption{Two illustrative visualization on the action segmentation task in Hollywood dataset. The colors indicate the ground truth and the estimated action labels for each frame from baseline ATBA and our proposed \method.}
    \label{fig:vis}
\end{figure}

\section{Extended Related Work}\label{ap:related work}

\subsection{Causal Discovery with Latent Variables}
Various studies have focused on uncovering causally related latent variables. For example, \cite{silva2006learning,kummerfeld2016causal,huang2022latent} use vanishing Tetrad conditions~\cite{spearman1928pearson} or rank constraints to detect latent variables in linear-Gaussian models, whereas \cite{shimizu2009estimation,cai2019triad,xie2020generalized,xie2022identification} rely on non-Gaussianity in their analyses of linear, non-Gaussian models.
Additionally, some methods seek to identify structures beyond latent variables, leading to hierarchical structures.
Certain hierarchical model-based approaches assume tree-like configurations, as seen in \cite{pearl1988probabilistic,zhang2004hierarchical,choi2011learning,drton2017marginal}, while other methods consider a more general hierarchical structure~\cite{xie2022identification,huang2022latent}.
Nonetheless, these approaches are restricted to linear frameworks and encounter increasing difficulties with complex datasets, such as videos.

\subsection{Causal Temporal Representation Learning}
In the context of sequence or time series data, recent advances in nonlinear Independent Component Analysis (ICA) have leveraged temporal structures and nonstationarities to achieve identifiability. Time-contrastive learning (TCL)~\cite{hyvarinen2016unsupervised} exploits variability in variance across data segments under the assumption of independent sources. Permutation-based contrastive learning (PCL)~\cite{hyvarinen2017nonlinear} discriminates between true and permuted sources using contrastive loss, achieving identifiability under the uniformly dependent assumption. The i-VAE~\cite{khemakhem2020variational} uses Variational Autoencoders to approximate the joint distribution over observed and nonstationary regimes. Additionally, (i)-CITRIS~\cite{lippe2022citris, lippe2023causal} utilizes intervention target information to identify latent causal factors. Other approaches such as LEAP~\cite{yao2021learning} and TDRL~\cite{yao2022temporally} leverage nonstationarities from noise and transitions to establish identifiability. CaRiNG~\cite{chen2024caring} extended TDRL to handle non-invertible generation processes by assuming sequence-wise recoverability of the latent variables from observations. 

All the aforementioned methods either assume stationary fixed temporal causal relations or that the domain variables controlling the nonstationary transitions are observed. To address unknown or unobserved domain variables, HMNLICA~\cite{halva2020hidden} integrates nonlinear ICA with a hidden Markov model to automatically model nonstationarity. However, this method does not account for the autoregressive latent transitions between latent variables over time. IDEA~\cite{li2024how} combines HMNLICA and TDRL by categorizing the latent factors into domain-variant and domain-invariant groups. For the variant variables, IDEA adopts the same Markov chain model as HMNLICA, while for the invariant variables, it reduces the model to a stationary case handled by TDRL. Both iMSM~\cite{balsells2023identifiability} and NCTRL~\cite{song2023temporally} extend this Markov structure approach by incorporating transitions in the latent space but continue to assume that the domain variables follow a Markov chain.

\subsection{Weakly-supervised Action Segmentation}
Weakly-supervised action segmentation techniques focus on dividing a video into distinct action segments using training videos annotated solely by transcripts~\cite{bojanowski2014weakly, huang2016connectionist, richard2018neuralnetwork, ding2018weakly, li2019weakly, chang2019d3tw, richard2017weakly, souri2021fast, chang2021learning, lu2021weakly, kuehne2017weakly, souri2022fifa, kuehne2018hybrid, zhang2023hoi}. Although these methods have varying optimization objectives, many employ pseudo-segmentation for training by aligning video sequences with transcripts through techniques like Connectionist Temporal Classification (CTC)~\cite{huang2016connectionist}, Viterbi~\cite{richard2017weakly, richard2018neuralnetwork, li2019weakly, lu2021weakly, kuehne2017weakly, kuehne2018hybrid}, or Dynamic Time Warping (DTW)~\cite{chang2019d3tw, chang2021learning}. For instance, \cite{huang2016connectionist} extends CTC to consider visual similarities between frames while evaluating valid alignments between videos and transcripts. Drawing inspiration from speech recognition, \cite{kuehne2017weakly, richard2017weakly, kuehne2018hybrid} utilize the Hidden Markov Model (HMM) to link videos and actions. \cite{ding2018weakly} initially produces uniform segmentations and iteratively refines boundaries by inserting repeated actions into the transcript. \cite{richard2018neuralnetwork} introduces an alignment objective based on explicit context and length models, solvable via Viterbi, to generate pseudo labels for training a frame-wise classifier. Similarly, \cite{li2019weakly} and \cite{lu2021weakly} propose novel learning objectives but still rely on Viterbi for optimal pseudo segmentation. Both \cite{chang2019d3tw, chang2021learning} use DTW to align videos to both ground-truth and negative transcripts, emphasizing the contrast between them. However, except for \cite{ding2018weakly}, these methods require frame-by-frame calculations, making them inefficient. More recently, alignment-free methods have been introduced to enhance efficiency. \cite{souri2021fast} learns from the mutual consistency between frame-wise classification and category/length pairs of a segmentation. \cite{lu2022set} enforces the output order of actions to match the transcript order using a novel loss function. Although POC~\cite{lu2022set} is primarily set-supervised, it can be extended to transcript supervision, making its results relevant for comparison.

\section{Limitations}\label{sec:limitations}
As noted in Sec.~\ref{sec:remark}, our main theorem relies on the condition that causal graphs among different domains must be distinct. Although our experiments indicate that this assumption is generally sufficient, there are scenarios in which it may not hold, meaning that the transition causal graphs are identical for two different domains, but the actual transition functions are different. We have addressed this partially through an extension to the mechanism variability assumption to higher-order cases (Corollary~\ref{cor: identifiability of domain variables under mechanism function variability}). However, dealing with situations where transition graphs remain the same across all higher orders remains a challenge. We acknowledge this as a limitation and suggest it as an area for future exploration.

We also observed that the random initialization of the nonlinear ICA framework can influence the total number of epochs needed to achieve identifiability, as illustrated in Figure~\ref{fig:simulation three phases}. Also, for the computational efficiency, the TDRL framework we adopted involves a prior network that calculated each dimension in the latent space one by one, thus making the training efficiency suboptimal. Since this is not directly related to major claim which is our sparse transition design, we acknowledge this as a limitation and leave it for future work.

\section{Boarder Impacts}\label{ap:broader impacts}
This work proposes a theoretical analysis and technical methods to learn the causal representation from time-series data, which 
facilitate the construction of more transparent and interpretable models to understand the causal effect in the real world. 
This could be beneficial in a variety of sectors, including healthcare, finance, and technology.
In contrast, misinterpretations of causal relationships could also have significant negative implications in these fields, which must be carefully done to avoid unfair or biased predictions.

\end{document}